\def\eqref#1{equation~\ref{#1}}
\def\1{\bm{1}}
\def\rmF{{\mathbf{F}}}
\DeclareMathAlphabet{\mathsfit}{\encodingdefault}{\sfdefault}{m}{sl}
\SetMathAlphabet{\mathsfit}{bold}{\encodingdefault}{\sfdefault}{bx}{n}
\definecolor{myblue}{HTML}{F0F8FF} 
\definecolor{mypurple}{HTML}{E6E6FA} 
\definecolor{mygreen}{HTML}{F0FFF0} 
\definecolor{mypink}{HTML}{FFF0F5}
\newcommand{\calI}{\mathcal{I}}
\newcommand{\calM}{\mathcal{M}}
\newcommand{\bbI}{\mathbb{I}}
\newcommand{\bbN}{\mathbb{N}}
\newcommand{\bbR}{\mathbb{R}}
\DeclareMathAlphabet{\mathbsf}{OT1}{cmss}{bx}{n}
\DeclareMathAlphabet{\mathssf}{OT1}{cmss}{m}{sl}
\DeclareSymbolFont{bsfletters}{OT1}{cmss}{bx}{n}  
\DeclareSymbolFont{ssfletters}{OT1}{cmss}{m}{n}
\DeclareMathSymbol{\bsfGamma}{0}{bsfletters}{'000}
\DeclareMathSymbol{\ssfGamma}{0}{ssfletters}{'000}
\DeclareMathSymbol{\bsfDelta}{0}{bsfletters}{'001}
\DeclareMathSymbol{\ssfDelta}{0}{ssfletters}{'001}
\DeclareMathSymbol{\bsfTheta}{0}{bsfletters}{'002}
\DeclareMathSymbol{\ssfTheta}{0}{ssfletters}{'002}
\DeclareMathSymbol{\bsfLambda}{0}{bsfletters}{'003}
\DeclareMathSymbol{\ssfLambda}{0}{ssfletters}{'003}
\DeclareMathSymbol{\bsfXi}{0}{bsfletters}{'004}
\DeclareMathSymbol{\ssfXi}{0}{ssfletters}{'004}
\DeclareMathSymbol{\bsfPi}{0}{bsfletters}{'005}
\DeclareMathSymbol{\ssfPi}{0}{ssfletters}{'005}
\DeclareMathSymbol{\bsfSigma}{0}{bsfletters}{'006}
\DeclareMathSymbol{\ssfSigma}{0}{ssfletters}{'006}
\DeclareMathSymbol{\bsfUpsilon}{0}{bsfletters}{'007}
\DeclareMathSymbol{\ssfUpsilon}{0}{ssfletters}{'007}
\DeclareMathSymbol{\bsfPhi}{0}{bsfletters}{'010}
\DeclareMathSymbol{\ssfPhi}{0}{ssfletters}{'010}
\DeclareMathSymbol{\bsfPsi}{0}{bsfletters}{'011}
\DeclareMathSymbol{\ssfPsi}{0}{ssfletters}{'011}
\DeclareMathSymbol{\bsfOmega}{0}{bsfletters}{'012}
\DeclareMathSymbol{\ssfOmega}{0}{ssfletters}{'012}
\newcommand{\tils}{\tilde{s}}
\newcommand{\tilX}{\tilde{X}}
\newcommand{\tilY}{\tilde{Y}}
\theoremstyle{plain}
\newtheorem{theorem}{Theorem}[section]
\newtheorem{lemma}[theorem]{Lemma}
\theoremstyle{definition}
\newtheorem{assumption}[theorem]{Assumption}
\theoremstyle{remark}
\newcommand{\att}{\mathsf{attn}}
\newcommand{\mha}{\mathsf{mha}}
\newcommand{\tilmha}{\widetilde{\mathsf{mha}}}
\newcommand{\ff}{\mathsf{ffn}}
\newcommand{\sm}{\mathsf{softmax}}
\newcommand{\lnor}{\mathsf{LN}}
\newcommand{\llm}{\mathsf{transformer}}
\newcommand{\sink}{\mathsf{lazy}}
\newcommand{\lip}{{\mathsf{lip}}}
\acrodef{mha}[MHA]{Multi-Head Attention}
\acrodef{ff}[FF]{Feed-Forward}
\title{\textsc{LightTransfer}: Your Long-Context LLM is Secretly a Hybrid Model with Effortless Adaptation}
\author{\name Xuan Zhang\thanks{Work done during Xuan Zhang’s associate membership at
Sea AI Lab.} \email xuanzhang.2020@phdcs.smu.edu.sg  \\
      \addr Singapore Management University
      \AND
      \name Fengzhuo Zhang \email fzzhang@u.nus.edu \\
      \addr National University of Singapore
      \AND
      \name Cunxiao Du \thanks{Correspondence to
Cunxiao Du.} \email ducx@sea.com\\
      \addr Sea AI Lab, Singapore 
      \AND
      \name Chao Du \email duchao@sea.com\\
      \addr Sea AI Lab, Singapore 
      \AND
      \name Tianyu Pang \email tianyupang@sea.com\\
      \addr Sea AI Lab, Singapore 
      \AND
      \name Wei Gao \email weigao@smu.edu.sg\\
      \addr Singapore Management University
      \AND
      \name Min Lin \email linmin@sea.com\\
      \addr Sea AI Lab, Singapore}
\begin{document}

\maketitle

\begin{abstract}
Scaling language models to handle longer contexts introduces substantial memory challenges due to the growing cost of key-value (KV) caches.
Motivated by the efficiency gains of hybrid models and the broad availability of pretrained large transformer backbones, we explore transitioning transformer models into hybrid architectures for a more efficient generation.
In this work, we propose \textsc{LightTransfer},
a lightweight method that transforms models such as LLaMA into hybrid variants.
Our approach identifies \textit{lazy} layers---those focusing on recent or initial tokens---and replaces their full attention with streaming attention. 
This transformation can be performed without any training for long-context understanding tasks or with minimal fine-tuning for o1-like long reasoning generation tasks that require stronger reasoning capabilities.
Experiments across diverse benchmarks and models (e.g., LLaMA, Mistral, QwQ-STILL) demonstrate that, 
even when half of the layers are identified as \textit{lazy},
\textsc{LightTransfer} achieves up to 2.17$\times$ throughput improvement with minimal performance loss ($<1.5\%$ on LongBench) and achieves 53.3\% on math benchmark AIME24 of advanced o1-like long reasoning model QwQ-STILL. Our project homepage: \url{https://sites.google.com/view/lighttransfer}.
\end{abstract}
\section{Introduction} 

Recent advancements in large language models (LLMs) have extended their capacity for handling long context inputs and generating long-form reasoning.
For example, LLaMA-3.1 supports context lengths up to 128K~\citep{dubey2024llama}, while OpenAI’s o1 can produce sequences of up to 100K tokens~\citep{o1_oai}.
As the cornerstone of the efficient inference of these models on long context, key-value (KV) cache stores precomputed key and value tensors for each token in the language sequence to avoid recomputing them for each attention layer.
However, as the number of model layers and input lengths increases, the memory required for storing the KV cache grows significantly, posing challenges for inference efficiency.

Various methods have been proposed to reduce the KV cache storage by modifying the model architecture~\citep{shazeer2019fast,brandon2024reducing,goldstein2024goldfinch,nawrot2024dynamic,wang2024model,yu2024effectively}. 
One promising approach is the hybrid model.
As shown in Figure~\ref{fig:intro}, in these hybrid models, certain layers of a standard transformer are replaced with more memory-efficient mechanisms such as RNNs~\citep{sherstinsky2020fundamentals}, Mamba~\citep{gu2023mamba}, and sliding window attention~\citep{beltagy2020longformer}.
These approaches exploit the notion that different layers can be manually assigned distinct functionalities, such as using memory-efficient mechanisms for local context processing and standard attention for global context handling~\citep{team2024gemma}, thereby achieving notable memory savings. 
Concrete examples of such hybrid architectures include Transformer-Recurrent Neural Network (RNN) designs such as YoCo~\citep{sun2024yoco}, Transformer-Mamba approaches such as Jamba~\citep{lieber2024jamba,team2024jamba}, and Transformer-Sliding Window models like Gemma 2~\citep{team2024gemma} and YoCo~\citep{sun2024yoco}.
However, a key limitation is that they require training the entire model from scratch.
\begin{figure}
\centering
\vspace{-.4cm}
\includegraphics[width=0.45\textwidth]{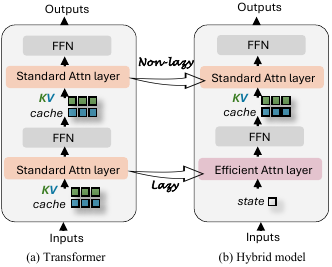}
\vspace{-.4cm}
\caption{(a) A standard transformer architecture. (b) A hybrid model in which certain layers of a standard transformer are replaced with more memory-efficient designs. \textsc{LightTransfer} identifies lazy layers in (a) and transforms them into more efficient variants, yielding (b).}
\label{fig:intro}
\vspace{-0.7cm}
\end{figure}

Given the substantial efficiency gains offered by hybrid models and the availability of large-scale pretrained transformer backbones, a natural direction is to transition these pretrained models into hybrid architectures with minimal additional training. 
A straightforward method is to replace traditional full attention layers with sparse attention, thereby adopting a fixed-size KV cache to reduce memory overhead. A representative example is \emph{streaming attention}~\citep{xiao2023efficient}, which augments the sliding-window mechanism by introducing \emph{sink tokens}. 
However, as Table~\ref{tbl:longbench} shows, completely substituting all standard attention layers with streaming attention leads to a severe degradation in the model’s ability to process long contexts, thereby undermining its capacity to capture global information.
Consequently, when transitioning from a pretrained transformer to a hybrid model, two primary challenges arise.
First, it is necessary to retain some standard attention layers to preserve the model’s long-context modeling capabilities, raising the critical question: which layers should remain intact?
Second, this transition should ideally be lightweight, enabling efficient adaptation with minimal data or even allowing it to be applied entirely at test time.
Otherwise, if large-scale pretraining data were required, one could simply train a hybrid model from scratch, undermining the value of a transition-based approach.

In response to the above challenges, we examine the attention patterns in different transformer layers to determine whether each layer exhibits distinct functionalities.
We conduct preliminary experiments (Section~\ref{sec:obs}) and identified two key findings:
First, \textit{certain layers in long-context LLMs exhibit lazy behavior}, primarily focusing on semantically unimportant tokens (e.g., the initial few tokens) and the most recent during answer generation. The properties of lazy layers address our first challenge and enable standard transformer-based LLMs to operate in a hybrid-like manner: identified lazy layers use streaming attention, whereas non-lazy layers retain full attention.
Second, after analyzing attention weight patterns, we find that \textit{layer behavior is consistent across tokens for a given long input}. This insight partially addresses the second challenge and paves the way for a test-time transformation in which selective modifications are applied during the prefilling stage, allowing for efficient adaptation without extensive retraining.

Building upon this insight, we propose \textsc{LightTransfer}, a lightweight method that transforms models such as \textit{LLaMA}, \textit{Mistral}~\citep{jiang2023mistral}, and \textit{QwQ}~\citep{qwq_preview} into their corresponding hybrid variants.
Specifically, as shown in Figure~\ref{fig:intro}, we analyze the attention allocation patterns in each layer to determine whether it can be treated as a lazy layer. In lazy layers, we apply streaming attention, while standard attention is retained in non-lazy layers.
The output of the transformer with a reduced KV cache differs from the original output due to the reduced cache size, and this difference is theoretically analyzed in Theorem~\ref{thm:informal}.
For tasks where the input is sufficiently long (i.e., long-context understanding), we leverage on-the-fly lazy layer identification at the prefilling stage, \textsc{LightTransfer-Test}. 
In addition, for o1-like long reasoning generation tasks, even though the questions can be relatively short (only a few dozen tokens) yet demand higher model capacity, we surprisingly find that minimal training still enables robust performance (\textsc{LightTransfer-Train}).
In practice, this transition requires only around 5K samples (originally utilized for long-reasoning ability distillation \citep{min2024imitate}), underscoring the lightweight nature of our approach.

We conduct experiments on four representative LLMs (i.e., LLaMA2-7B-chat~\citep{touvron2023llama}, Mistral-7B-Instruct~\citep{jiang2023mistral}, LLaMA3-8B-Instruct and its 70B counterpart~\citep{dubey2024llama}), evaluating them on long-context benchmarks including LongBench~\citep{bai2023longbench} and Needle-In-A-Haystack (NIAH)~\citep{kamradt2023needle}. In addition, we adapt an o1-like long reasoning model QwQ-32B-STILL and assess its performance on MATH-OAI~\citep{lightmanlet}, AIME24~\footnote{\url{https://huggingface.co/datasets/AI-MO/aimo-validation-amc}}, and GSM8K~\citep{cobbe2021training}.
Experimental results indicate that hybrid models converted via \textsc{LightTransfer} achieve performance on par with their standard transformer counterparts.
For example, on long-context understanding tasks, it achieves only a 1.45\% performance decline on LongBench.
For long reasoning tasks, it achieves performance that is comparable to or even better on the widely used mathematical benchmark AIME24, reaching 53.3\% accuracy.
Notably, these results were obtained while half of the model’s layers employed streaming attention, yielding up to a 2.17$\times$ increase in throughput.\looseness=-1

\section{Related Works}
Memory-efficient architectures, such as linear RNN-based architectures (e.g., Mamba~\citep{gu2023mamba}) and those employing sparse attention methods (e.g., streaming attention~\citep{xiao2023efficient}), have demonstrated clear advantages in deployment, including reduced memory usage and higher throughput~\citep{peng2023rwkv,dao2024transformers,yang2023gated,sun2024yoco,lieber2024jamba,team2024gemma}. 
However, a key drawback of these memory-efficient models is their limited ability to handle extended contexts effectively~\citep{behrouz2024titans,yuan2024kv}.
Meanwhile, the inference memory cost of standard transformers (i.e., the storage of the KV cache) grows linearly as the context length increases~\citep{shi2024keep,li2024prompt,li2024survey}.
To address these challenges, recent research has proposed hybrid architecture: maintaining the strong capabilities of pretrained transformers while selectively substituting certain transformer layers with more memory-efficient modules, thereby balancing high performance with practical deployment efficiency~\citep{lieber2024jamba,team2024gemma,sun2024yoco,botev2024recurrentgemma,de2024griffin}.
For example, Jamba~\citep{lieber2024jamba,team2024jamba} integrates Mamba~\citep{gu2023mamba} with transformer, Gemma 2~\citep{team2024gemma} alternates sliding-window attention with standard attention layers, and Minimax-01~\citep{li2025minimax} employs lightning attention~\citep{qin2024lightning} in certain layers.
However, a key limitation of these methods is their reliance on training the entire model from scratch.
Although recent approaches aim to leverage the capabilities of large-scale pretrained models by converting selected layers into memory-efficient structures to form a well-trained hybrid model, they still depend on extensive training data~\citep{,wang2024mamba,ge2024little}. For instance, LongGen~\citep{ge2024little} transforms certain layers in pretrained LLM into sparse attention but requires retraining on over 2TB of data.
Differently, our \textsc{LightTransfer} framework is designed to be substantially more lightweight. 
Despite requiring no additional training for long-context understanding tasks, and only 5K training examples for more demanding long-text reasoning tasks (as originally used for long-reasoning ability distillation~\citep{min2024imitate}), it still achieves strong performance on both fronts.
The superiority of our \textsc{LightTransfer} comes from the identification of each layer’s function, whereas LongGen always uses a fixed structure (retaining the middle layers for full attention).
Some works also attempt to fully transfer transformer models into RNN-like architectures~\citep{kasai2021finetuning,zhang2024hedgehog,mercat2024linearizing,zhang2024lolcats,bick2024transformers}.  However, these methods primarily focus on short-context tasks (e.g., QA), whereas our approach targets long-context scenarios.

\section{Preliminary}
Before introducing \textsc{LightTransfer}, we provide a brief overview of the generative inference in autoregressive LLMs, which is the key background for our method.

\textbf{Inference stages.}
The typical generative LLM inference process involves two stages: (1)~\textit{Prefilling}: the autoregressive LLM processes the input prompt $X$ by parallel computing, and also saves the KV cache of tokens in $X$. The output of the last token in this stage is the first token of the response.
(2)~\textit{Decoding}:
after the prefilling stage is completed, the LLM generates output tokens one by one, and saves their KV cache.
In each decoding step, a new token is generated based on the current token and the KV cache stored from earlier steps, continuing until a stop criterion is met.

\begin{figure}[t]
\centering
\includegraphics[width=0.9\textwidth]{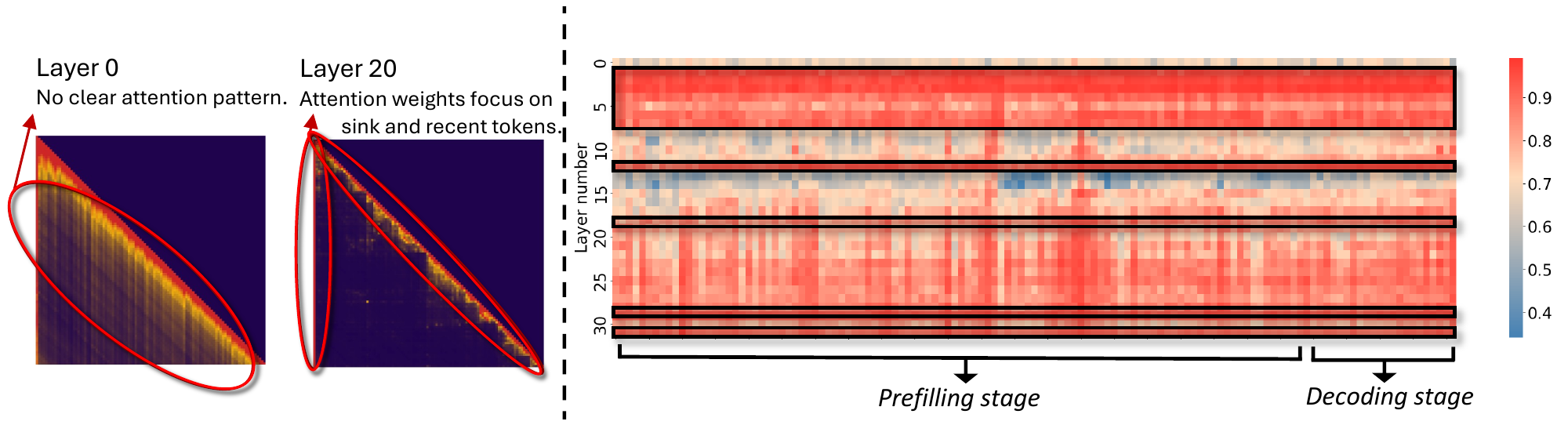}
\caption{
Visualization of attention weight distributions on LLaMA3-8B.
Left: The attention patterns across different layers.
Right: Each cell in the heatmap represents the aggregated attention weight from each query token (x-axis) to two token groups: (1) the initial $W_{sink}$ tokens, and (2) the most recent $W_{recent}$ tokens, during both the prefilling and decoding stages. Layers predominantly attending to these token groups are highlighted with black outlines. The color intensity indicates the sum of attention weights, averaged across all samples in a batch.}
\label{fig:obs_3}
\end{figure}

\section{Observations}
\label{sec:obs}


In this section, we analyze the attention patterns during inference in long-context LLMs,
providing insights that motivate our approach to transform the standard transformer into its corresponding hybrid variant.
The study is conducted on the LLaMA3-8B-Instruct model~\citep{dubey2024llama} using a sample from the LongBench~\citep{bai2023longbench} benchmark.
Our key findings are as follows:

\textbf{Layer behavior in long-context LLMs during inference.}
Previous research~\citep{xiao2023efficient} has shown that a large portion of attention in LLMs tends to focus on semantically unimportant tokens $X_\text{initial}$ (e.g., the first few tokens) and the most recent tokens $X_\text{recent}$ (i.e., tokens in the sliding window).
We refer to this pattern as \textit{lazy} behavior,  likening it to skimming a paper by reading only the first lines and the conclusion.
While it is also called attention sink~\citep{xiao2023efficient,gu2024attention}, we emphasize the shortcut nature by referring to it as lazy.
Through our analysis, we find that even with long contexts, some layers exhibit more pronounced lazy behavior, which we define as \emph{lazy layers}.
The left panel of Figure~\ref{fig:obs_3} presents the attention patterns across different layers.
We observe that some layers (e.g., layer 0) do not follow a clear pattern in attention weight distribution, while others (e.g., layer 20) show a clear lazy behavior pattern.
Consequently, a more memory-efficient attention mechanism can be employed in these lazy layers by retaining only a subset KV cache of constant size.

\textbf{Layer behavior remains consistent for a given input.}
To further explore whether a layer consistently functions as a lazy layer during generation for a fixed prompt, we visualize the attention weights for $\{X_\text{initial}, X_\text{recent}\}$ across all layers for all generated tokens in the right panel of Figure~\ref{fig:obs_3}, using a randomly selected sample (additional examples are provided in Figure~\ref{fig:app_example}).
Notably, for a given input prompt, layers that exhibit lazy behavior maintain this pattern relatively consistently across tokens. This suggests a certain degree of stability in attention dynamics throughout the generation process. In addition, the indexes of these consistent lazy layers vary according to different prompts. This necessitates the test-time algorithm in the following section.

\begin{figure*}
\centering
\includegraphics[width=0.9\textwidth]{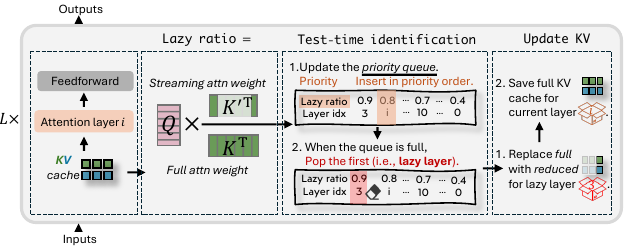}
\caption{
The framework of our \textsc{LightTransfer-Test}. A priority queue is maintained during the prefilling stage to store the lazy ratio and corresponding layer index after processing each layer. Once the queue reaches its capacity, the layer with the highest lazy ratio is identified as a lazy layer, and its KV cache is reduced, freeing memory for storing the KV cache of the current layer.}
\label{fig:method}
\vspace{-.2cm}
\end{figure*}

\section{Methodology: \textsc{LightTransfer}}

 

In this section, we introduce \textsc{LightTransfer}, a method for converting pretrained transformers into hybrid architectures for a more efficient generation.
\textsc{LightTransfer} leverages our observation of lazy layers by replacing full attention with streaming attention.
The method has two settings: (1)
For tasks like long-context understanding, \textsc{LightTransfer-Test} allows for on-the-fly transformation at test time without requiring additional training. 
(2) For tasks demanding higher model capacity, such as o1-like long reasoning generation, \textsc{LightTransfer-Train} involves fine-tuning to adapt the model to the hybrid architecture. 

\subsection{\textsc{LightTransfer-Test}}
 

As shown in Figure~\ref{fig:method}, the first step in applying \textsc{LightTransfer}-\textsc{Test} is identifying lazy layers, defined as those whose final $w_{\text{last}}$ number of tokens in queries (i.e., \(X_{\text{last}}\)) allocate the most attention to $X_{\text{initial}} \cup X_{\text{recent}}$. 
To measure how the model allocates attention at layer \(i\), we define a lazy ratio \(r_i\):\\[-.2cm]
\begin{equation}
r_i 
= 
\frac{1}{w_{\text{last}}}
\sum_{\hat{x} \in X_{\text{last}}}
\sum_{x \in \{X_{\text{initial}}, X_{\text{recent}}\}}
A_i(\hat{x}, x),
\label{eq:lazy}
\end{equation}
where \(A_i(\hat{x}, x)\) is the averaged attention weight over all heads from a query token \(\hat{x}\) to a key token \(x\) at layer \(i\). 
Intuitively, a higher \(r_i\) indicates that \(X_{\text{last}}\) focuses more heavily on these particular key sets, thus exhibiting more lazy attention.
To ensure that only \(P\) layers with the lowest lazy ratios maintain full attention during the prefilling stage and thus reduce peak memory usage, we adopt a priority queue. 
We treat the lazy ratio \(r_i\) as the priority in a max-based priority queue of size \(P\). Whenever the queue exceeds capacity, the layer with the highest lazy ratio is popped, labeled lazy, and its standard attention is replaced with streaming attention. Here we do not replace the standard attention with streaming attention in a head-wise manner due to the inefficiency, discussed in Appendix~\ref{app:head}.
Specifically, for each lazy layer \(i\), we retain only the KV caches corresponding to \(\{X_{\text{initial}}, X_{\text{recent}}\}\) and discard others.
During decoding, memory usage is naturally reduced because the decoding process relies on the already updated (and thus reduced) KV caches from the prefilling stage.

\textbf{Identification burden.} 
FlashAttention~\citep{dao2023flashattention} is widely used to accelerate computations during the prefilling phase, but it does not explicitly expose attention weights. A direct application of our lazy layer identification strategy would thus require recomputing the attention matrix, incurring non-negligible overhead. 
To circumvent this issue, as shown in Table~\ref{tbl:psedo_pre}, we leverage the 
$\log$-sum-exp values (i.e., the denominator) of all attention weights produced by FlashAttention. Consequently, we only need to recompute the streaming attention score (a constant-size matrix multiplication), thus eliminating the need for a full recomputation.
Our identification algorithm mitigates additional latency introduced by full recomputation, resulting in only a slight throughput reduction of 0.0058 to 0.0014 relative to a baseline of 1 across sequence lengths from 4K to 32K. 
Notably, longer sequences result in smaller relative throughput reduction. This occurs because the prefill operation grows with sequence length, whereas our identification process remains \(O(1)\).
As a result, when \(n\) is large, the identification overhead is overshadowed by the overall prefill cost.

\begin{table}[t]
\centering
\caption{Torch style code for our lazy ratio calculation with flash attention.}
\scalebox{0.78}{\begin{tabular}{lllllllll}
\toprule
\texttt{\textcolor{blue}{def} Lazy\_ratio\_calculation(}\\
\ \ \ \ \ \ \ \ \texttt{q,}\ \texttt{\textcolor{gray}{$\#$\ bs\ $*$\ num\_heads\ $*$\ seq\_len\ $*$\ head\_dim}}\\
\ \ \ \ \ \ \ \ \texttt{k,}\ \texttt{\textcolor{gray}{$\#$\ bs\ $*$\ num\_heads\ $*$\ seq\_len\ $*$\ head\_dim}}\\
\ \ \ \ \ \ \ \ \texttt{v,}\ \texttt{\textcolor{gray}{$\#$\ bs\ $*$\ num\_heads\ $*$\ seq\_len\ $*$\ head\_dim}}\\
\ \ \ \ \ \ \ \ \texttt{w\_last, w\_sink, w\_recent):}\\
\ \ \ \ \ \ \ \ \texttt{attn\_out, lse = flash\_attn(q, k, v, }\\
\texttt{causal=True, return\_lse=True)}\\
\ \ \ \ \ \ \ \ \texttt{q\_last = q[:, -w\_last:].permute(0, 2, 1, 3)}\\
\ \ \ \ \ \ \ \ \texttt{k\_comb = torch.\textcolor{blue}{cat}([k[:, 0:w\_sink],}\\
\texttt{k[:, -w\_recent:]], dim=1).permute(0, 2, 3, 1)}\\
\ \ \ \ \ \ \ \ \texttt{log\_lazy\_ratio = torch.\textcolor{blue}{matmul}(q\_last, k\_comb)}\\
\texttt{.logsumexp(dim=-1)- lse}\\
\ \ \ \ \ \ \ \ \texttt{\textcolor{blue}{return} log\_lazy\_ratio}\\
\bottomrule
\end{tabular}}
\label{tbl:psedo_pre}
\vspace{-0.5cm}
\end{table}

\subsection{\textsc{LightTransfer-Train}}
For o1-like long reasoning tasks, where the input question typically consists of only a few dozen words, the lazy ratio $r_i$ is not a reliable indicator of \textit{lazy}.
Because the sliding window is relatively large compared to the input, $r_i$ remains at $1$ across all layers. 
To address this, we adopt a pre-selection strategy.
Specifically, for each sample in the training set, we feed both the question and the answer as input to the LLM, thereby providing sufficient context for each sample to reveal which layers are lazy. We then compute the frequency for each layer and select those with the highest lazy layer counts.
However, frequency-based selection may not be fully optimal for each sample, while o1-like long reasoning tasks are inherently difficult, so additional fine-tuning allows the model to adapt to the new hybrid architecture and re-balance capacity across layers.
Therefore, once these layers are identified, we perform supervised fine-tuning (SFT) under a hybrid architecture in which lazy layers employ streaming attention, while non-lazy layers retain standard attention.
During inference, we simply rely on the preselected lazy layers, without requiring on-the-fly identification.

\subsection{Theoretical Analysis} 
We first provide a theoretical analysis of the approximation error of \textsc{LightTransfer-Test} and then discuss how this analysis implies the performance of \textsc{LightTransfer-Train}. We would like to highlight that our lazy layer identification procedures in \textsc{LightTransfer-Test} are implicitly optimizing an upper bound of the error of the whole network output induced by reducing the KV cache. We denote the set of layer indexes whose KV cache is reduced as $\calI$. For any layer $i\in\calI$, we denote the attention score of the discarded KV pairs as $s_{i}=1-\sum_{x \in \{X_{\text{initial}}, X_{\text{recent}}\}} 
A_i(\hat{x}, x) $. Then we have the following upper bound of the error of the network output.\looseness=-1
\begin{theorem}[Informal]\label{thm:informal}
    If the Frobenius norms of all the parameters in a $L$-layer with $H$-attention heads transformer are upper bounded by $B$ and the activation function is $L_{\lip}$-Lipschitz, then we have that
    \vspace{-0.3em}
    \begin{align*}
        &~~~~~~\text{Err. of \textsc{LightTransfer} in logit}\nonumber\\
        &\!\!\!\quad\leq\!\! 2LB^{2}\!\big(H+L_{\lip}B+4HB^{2}\big) \!\!+\!\! 2HB^{2}(1+L_{\lip}B^{2})\!\!\sum_{i\in\calI}s_{i}.
    \end{align*}

    \vspace{-1em}
    If we denote the error of hidden states at layer $i$ as $e_{i}$, then it evolves as
    \vspace{-0.5em}
    \begin{align*}
        e_{i}\!\leq\!\! e_{i-1}\!\!+\! C_{1}\!\min\{2,C_{2}\!\cdot\! e_{i-1}\}\!\!+\!\!2H(B\!+\!L_{\lip}B^{3})\bbI\{\!i\!\in\!\calI\!\}s_{i},
    \end{align*}

    \vspace{-1em}
    where $C_{1}$ and $C_{2}$ are quantities related to $B$, $H$ and $L_{\lip}$.
\end{theorem}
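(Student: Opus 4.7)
The plan is to propagate the discrepancy between the hidden states of the original transformer and those produced by \textsc{LightTransfer} through the $L$ blocks, decomposing the increment at each layer $i$ into two pieces: (i) amplification of the previous-layer error $e_{i-1}$ by the Lipschitz constants of multi-head attention and the feed-forward block, and (ii) a fresh injection term that is nonzero only on lazy layers $i\in\calI$ and that is controlled by the discarded softmax mass $s_i$. Telescoping this recursion from $i=1$ to $L$ and composing with the final Lipschitz output head will yield the stated logit bound, while the per-layer recursion drops out of the same analysis.

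\textbf{Step 1: per-layer perturbation from discarding KV entries.} Fix a lazy layer $i$ and one attention head. Let $(a_j)_j$ be the original softmax weights, let $\calS$ index the retained streaming set $\{X_{\text{initial}},X_{\text{recent}}\}$, and write $s_i=\sum_{j\notin\calS}a_j$. I would first prove the clean estimate
\begin{equation*}
\Big\|\sum_{j}a_jV_j-\sum_{j\in\calS}\frac{a_j}{1-s_i}V_j\Big\|\leq 2s_i\max_j\|V_j\|
\end{equation*}
by rewriting the renormalized sum as $\sum_{j\in\calS}a_jV_j+\tfrac{s_i}{1-s_i}\sum_{j\in\calS}a_jV_j$ and using $\|\sum_{j\in\calS}a_jV_j\|\leq(1-s_i)\max_j\|V_j\|$ to cancel the $1/(1-s_i)$ factor. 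Since Frobenius norms of parameters are bounded by $B$, the values $V_j$ are bounded by $B^2$; aggregating over $H$ heads, composing with the output projection ($B$-Lipschitz) and the FFN ($L_{\lip}B^2$-Lipschitz after residual), the total injected error at a lazy layer is at most $2H(B+L_{\lip}B^3)s_i$, matching the recursion's indicator term.

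\textbf{Step 2: layerwise recursion and telescoping.} Each transformer block is Lipschitz in its input with a constant $C_2$ polynomial in $B,H,L_{\lip}$, which yields the amplification $C_1 C_2\, e_{i-1}$. Because the outputs of multi-head attention and FFN are themselves uniformly bounded (by $HB^2$ and $L_{\lip}B^3$ respectively), the error in these outputs is trivially at most a universal constant, so the amplification can be replaced by $C_1\min\{2,C_2e_{i-1}\}$. Adding the residual connection (which contributes $e_{i-1}$ with multiplier one) and the Step~1 injection produces exactly the claimed recursion for $e_i$. Unrolling from $i=1$ to $L$, the non-injection contributions sum to at most $2LB^2(H+L_{\lip}B+4HB^2)$, while the injection terms collect into $2HB^2(1+L_{\lip}B^2)\sum_{i\in\calI}s_i$; a final $B$-Lipschitz unembedding leaves the constants unchanged, giving the first displayed inequality.

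\textbf{Main obstacle.} The sharp part lies in Step~1: a naive softmax-renormalization bound loses a factor $1/(1-s_i)$, which would blow up as $s_i\to 1$ and destroy the clean linear dependence on $\sum_{i\in\calI}s_i$. The rewriting above avoids this by exploiting the fact that the retained weights are nonnegative and sum to exactly $1-s_i$. A secondary difficulty is keeping constants tight through multi-head concatenation and the residual stream so that the final bound has the claimed coefficient $2HB^2(1+L_{\lip}B^2)$; this requires careful bookkeeping of which Lipschitz factor is multiplicative and which is additive across the $Q$, $K$, $V$, output, and FFN matrices, and is where the dual bound $\min\{2,C_2 e_{i-1}\}$ becomes essential to prevent a geometric blowup over $L$ layers.
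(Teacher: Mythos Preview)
Your proposal is correct and follows essentially the same route as the paper: a layer-by-layer error recursion that separates Lipschitz amplification from a fresh injection term controlled by the discarded softmax mass $s_i$, followed by telescoping. Two small points of bookkeeping are worth tightening. First, the $\min\{2,\cdot\}$ truncation in the paper comes specifically from the pre-norm LayerNorm (each $\lnor$ output has row-$\ell_2$ norm $\le 1$, so $\|\lnor(\tilde Y)-\lnor(Y)\|_{2,\infty}\le 2$), not from boundedness of the MHA/FFN outputs; your argument via output boundedness would need $\sigma(0)=0$ for the FFN part and yields a different constant than the explicit~$2$. Second, the final unembedding does not ``leave the constants unchanged'': it is $B$-Lipschitz and is precisely what turns the per-layer injection coefficient $2H(B+L_{\lip}B^3)=2HB(1+L_{\lip}B^2)$ into the stated $2HB^2(1+L_{\lip}B^2)$. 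The paper's version of your Step~1 estimate is a direct computation that $\|\tilde s_1-s_1\|_1=\|s_2\|_1$ (their Lemma on the sink); your rewriting trick reaches the same $2s_i\max_j\|V_j\|$ bound and is equally valid.
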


\vspace{-0.5em}
The formal statement and the proof of Theorem~\ref{thm:informal} are provided in Appendix~\ref{app:theory}. We note that the error recursive expression consists of three terms. The first term represents the error from the previous layer. The second term represents the error from the previous layer amplified by the current layer. Thanks to the layer normalization, this term will be truncated by $2$. The last term represents the newly introduced error if we shorten the KV cache at the current layer. By relaxing this recursive formula, we derive the upper bound of the error between the logits of our method and the original transformer. This shows that the error is upper bounded by the sum of the attention scores of the removed KV pairs up to an additive constant. We highlight that our algorithm optimizes Eqn.~\eqref{eq:lazy}, which is exactly the upper bound of the error induced by \textsc{LightTransfer} in logit up to a constant. We note that this theorem also provides the error analysis of the initial point of this fine-tuning process. The fine-tuning will further decrease the error induced by \textsc{LightTransfer} shown in Theorem~\ref{thm:informal}. This theoretical bound is further confirmed in Appendix~\ref{sec:exp_theory}.

\section{Experiments}
In this section, we empirically validate that \textsc{LightTransfer} can accelerate LLM generation while maintaining long-text capabilities including two scenarios 1) long context understanding, and 2) o1-like long reasoning generation, and uncover several insightful findings.

\subsection{Experiments on Long-Context Understanding Tasks}
In these experiments, we only apply \textsc{LightTransfer-Test}. 
As previously discussed, the input length for these understanding tasks is sufficient to enable on-the-fly lazy-layer detection during the prefilling stage, making additional training unnecessary.
\subsubsection{Experiments on LongBench}
\textbf{Settings.}
We evaluate \textsc{LightTransfer-Test} using four widely used LLMs, specifically LLaMA2-7B-chat~\citep{touvron2023llama}, Mistral-7B-Instruct~\citep{jiang2023mistral}, LLaMA3-8B-Instruct and LLaMA3-70B-Instruct~\citep{dubey2024llama} on LongBench~\citep{bai2023longbench}, which is a multi-task benchmark designed to assess the long-context capabilities of LLMs. Detailed experimental configurations can be found in
Appendix~\ref{sec:setting}.
An ablation study on these hyperparameters is provided in the Appendix~\ref{sec:params}.

\textbf{Baselines.}
Since no existing approach can convert a transformer into a hybrid model at test time only, layer-level KV cache reduction methods serve as our closest baselines (Detailed discussions on how \textsc{LightTransfer-Test} relates to layer-level KV cache reduction methods are available in Appendix~\ref{sec:kv_cache_reduction}). 
Specifically, we compare \textsc{LightTransfer-Test} against the following baselines:  
1) Standard: a standard transformer-based model in which each layer employs the original self-attention mechanism.
2) Streaming LLM~\citep{xiao2023efficient}: A memory-efficient approach that modifies each attention layer in a standard transformer to use only the KV cache for the first few tokens and the most recent tokens.
3) MiniCache~\citep{liu2024minicache}: An inter-layer KV cache reduction method that merges KV cache of every two adjacent layers after the model's midpoint using spherical interpolation while retaining important tokens to reduce cache storage.  
4) SqueezeAttention~\citep{wang2024squeezeattention}: An inter-layer KV cache reduction method that precisely distributes the KV-cache budget across layers.
\begin{table*}[t]
\centering
\setlength\tabcolsep{3.9pt} 
\caption{Performance comparison of \textsc{LightTransfer-Test} and baseline methods on LLaMA-2-7B-chat, Mistral-7B-Intruct, LLaMA-3-8B-Instruct, and LLaMA-3-70B-Instruct using LongBench. \textbf{Bold} denotes the best method, and \ul{underlined} denotes the second best.}
\scalebox{0.85}{\begin{tabular}{lrrrrrrrrrrrrrrrrrr}
\toprule
&\multicolumn{3}{c}{\textbf{Single-Doc. QA}}&\multicolumn{3}{c}{\textbf{Muti.-Doc. QA}}&\multicolumn{3}{c}{\textbf{Summary}}&\multicolumn{3}{c}{\textbf{Few-shot}}&\multicolumn{2}{c}{\textbf{Syn.}}&\multicolumn{2}{c}{\textbf{Code}}&\multirow{2}*{\rotatebox[origin=c]{90}{\textbf{Average}\ \ \ \ \ }}\\
\cmidrule(lr){2-4}\cmidrule(lr){5-7}\cmidrule(lr){8-10}\cmidrule(lr){11-13}\cmidrule(lr){14-15}\cmidrule(lr){16-17}
&\rotatebox[origin=c]{90}{NrtvQA}&\rotatebox[origin=c]{90}{Qasper}&\rotatebox[origin=c]{90}{MF-en}&\rotatebox[origin=c]{90}{HotpotQA}&\rotatebox[origin=c]{90}{Musique}&\rotatebox[origin=c]{90}{DuReader}&\rotatebox[origin=c]{90}{GovReport}&\rotatebox[origin=c]{90}{QMSum}&\rotatebox[origin=c]{90}{MultiNews}&\rotatebox[origin=c]{90}{TREC}&\rotatebox[origin=c]{90}{TriviaQA}&\rotatebox[origin=c]{90}{SAMSum}&\rotatebox[origin=c]{90}{PCount}&\rotatebox[origin=c]{90}{PRe}&\rotatebox[origin=c]{90}{LCC}&\rotatebox[origin=c]{90}{RB-P}\\
\midrule
\multicolumn{18}{l}{\textit{\textbf{LLaMA2-7B-chat}}}\\\cmidrule(lr){0-1}
Standard&\textbf{19.1}&\textbf{21.6}&\textbf{36.9}&\textbf{27.7}&\textbf{8.6}&\ul{6.5}&\textbf{27.1}&\ul{20.8}&\textbf{26.0}&\textbf{64.0}&\textbf{83.6}&\textbf{41.3}&\textbf{2.9}&\textbf{7.5}&\textbf{60.6}&\textbf{54.9}&\textbf{31.8}\\
Streaming&13.1&15.2&26.9&23.1&5.5&4.4&21.1&19.9&24.2&61.0&82.8&38.9&2.1&4.0&59.0&52.2&28.3\\
MiniCache&13.1&13.7&\ul{30.3}&15.6&4.7&\textbf{9.8}&21.5&\textbf{20.9}&24.3&\ul{63.0}&83.1&35.1&\ul{2.2}&\ul{6.1}&53.4&46.5&27.7\\
SqueezeAtt.&\ul{15.9}&15.7&27.0&25.5&6.5&4.3&21.9&19.6&23.3&62.0&\ul{83.2}&\ul{39.9}&1.9&0.5&\ul{60.0}&53.5&28.7\\
\rowcolor{gray!30}
\textsc{LitTrans}&
15.8&\ul{18.3}&30.1&\ul{27.3}&\ul{7.0}&4.7&\ul{22.7}&20.2&\ul{25.1}&62.0&82.8&39.6&2.1&1.2&59.4&\ul{53.6}&\ul{29.5}
\\
\midrule
\multicolumn{18}{l}{\textit{\textbf{Mistral-7B-Instruct}}}\\\cmidrule(lr){0-1}
Standard&\textbf{29.7}&\ul{40.5}&\ul{53.4}&\ul{50.0}&\textbf{29.1}&\textbf{32.9}&\textbf{34.9}&\textbf{25.4}&\textbf{27.7}&\textbf{76.0}&89.1&\textbf{47.3}&5.0&\textbf{98.5}&\ul{60.4}&\textbf{62.1}&\textbf{47.6}\\
Streaming&22.2&32.1&44.8&41.7&23.0&20.3&24.8&21.3&26.0&65.0&86.7&40.4&3.5&46.0&52.8&47.9&37.4\\
MiniCache&19.7&30.3&35.6&29.5&15.5&20.3&24.8&21.3&26.0&65.0&86.7&40.4&3.8&45.1&52.8&47.9&35.3\\
SqueezeAtt.&26.8&30.4&38.4&44.3&21.0&18.6&24.9&21.0&26.2&75.5&\ul{89.2}&46.3&\textbf{6.5}&89.0&\textbf{60.6}&60.6&42.5\\
\rowcolor{gray!30}
\textsc{LitTrans}&\ul{29.0}&\textbf{41.0}&\textbf{53.6}&\textbf{50.5}&\ul{27.5}&\ul{32.3}&\ul{34.8}&\textbf{25.4}&\ul{27.3}&\textbf{76.0}&\textbf{89.3}&\textbf{47.3}&\ul{6.0}&\ul{97.5}&59.9&\ul{61.3}&\ul{47.4}\\
\midrule
\multicolumn{18}{l}{\textit{\textbf{LLaMA-3-8B-Instruct}}}\\\cmidrule(lr){0-1}
Standard&\textbf{23.4}&\textbf{32.8}&\textbf{39.6}&\textbf{44.7}&\ul{22.2}&\textbf{20.1}&\textbf{28.8}&\textbf{23.3}&\textbf{27.0}&\ul{73.5}&\ul{90.6}&\textbf{41.9}&3.6&\textbf{72.0}&58.1&51.3&\textbf{40.8}\\
Streaming&19.5&17.5&26.1&36.4&16.1&12.1&22.8&21.4&25.4&66.0&86.4&40.1&3.5&\ul{70.7}&\ul{59.7}&\textbf{54.2}&36.1\\
MiniCache&17.4&10.9&18.4&11.5&6.7&\ul{15.9}&23.8&20.1&25.5&\textbf{74.5}&84.5&37.4&3.2&64.1&48.5&45.3&31.7\\
SqueezeAtt.&20.0&\ul{19.6}&26.2&37.5&18.7&13.3&23.8&22.0&23.8&72.5&90.0&\ul{41.5}&\ul{6.7}&66.0&55.2&47.6&36.5\\
\rowcolor{gray!30}
\textsc{LitTrans}&
\ul{23.2}&18.3&\ul{35.7}&\ul{43.7}&\ul{20.9}&14.5&\ul{24.1}&\ul{22.3}&\ul{26.0}&71.0&\textbf{91.1}&41.4&\textbf{6.9}&67.0&\textbf{60.2}&\ul{53.4}&\ul{38.7}\\
\midrule
\multicolumn{18}{l}{\textit{\textbf{LLaMA-3-70B-Instruct}}}\\\cmidrule(lr){0-1}
Standard&25.6&\textbf{46.4}&\textbf{51.4}&\textbf{49.8}&\ul{28.8}&\textbf{28.7}&\textbf{32.2}&\textbf{22.4}&\textbf{27.6}&\ul{73.5}&\textbf{92.9}&\textbf{45.7}&\textbf{12.0}&\textbf{68.5}&41.6&\ul{69.7}&\textbf{44.8}\\
Streaming&25.4&36.2&34.4&44.3&22.7&15.0&25.8&20.2&26.2&66.5&91.1&43.6&\ul{11.5}&\ul{68.0}&\ul{41.9}&67.1&40.0\\
MiniCache&25.1&\ul{45.2}&38.4&46.2&24.9&17.8&\ul{29.1}&\ul{22.3}&\ul{27.1}&71.0&86.7&41.3&10.1&67.0&35.6&54.4&40.1\\
SqueezeAtt.&\textbf{26.3}&36.8&34.0&48.1&25.0&17.5&28.0&21.5&25.5&71.5&\ul{92.8}&\ul{44.8}&\ul{11.5}&67.0&41.5&68.5&41.3\\
\rowcolor{gray!30}
\textsc{LitTrans}&\ul{25.8}&44.3&\ul{46.9}&\ul{49.3}&\textbf{29.4}&\ul{20.8}&28.4&22.1&26.9&\textbf{74.0}&92.3&43.9&\ul{11.5}&\ul{68.0}&\textbf{43.6}&\textbf{69.8}&\ul{43.6}\\

\bottomrule
\end{tabular}}
\label{tbl:longbench}
\end{table*}

\textbf{Results.}
Table~\ref{tbl:longbench} summarizes the performance across various tasks in the LongBench~\citep{bai2023longbench} benchmark. We have the following findings:\looseness=-1

\textit{LLMs exhibit redundancy across layers. }
As shown in the table, although MiniCache has some limitations, both SqueezeAttention and \textsc{LightTransfer-Test} enable the model to handle long-text tasks effectively, incurring only a slight performance decrease (an average drop of 4.0\% and 1.5\%, respectively) when removing the KV cache in 50\% of the layers. This finding suggests that LLMs exhibit redundancy in their layer-level KV caches.

\textit{The transferred hybrid architectures can preserve strong long-context understanding capability.}
\textsc{LightTransfer-Test} applies streaming attention in some layers of a transformer-based model while retaining standard self-attention in others, striking an effective balance between computational efficiency and representational capacity. 
In contrast, MiniCache adopts cross layer attention (CLA)~\citep{brandon2024reducing} (sharing one KV cache across adjacent layers), and SqueezeAttention allocates distinct KV-cache quotas per layer.
Under a higher compression ratio than MiniCache and the same ratio as SqueezeAttention, \textsc{LightTransfer-Test} surpasses them by 6.1\% and 2.6\%, respectively, demonstrating the effectiveness of transitioning transformers into hybrid models for memory-efficient inference. This superiority partially originates from the fact that our algorithm explicitly optimizing the error upper bound in Theorem~\ref{thm:informal}. In contrast, the optimization methods of MiniCache and SqueezeAttention do not control the error induced by KV reduction in a theoretically plausible manner.\looseness=-1

\begin{figure}[t]
\centering 
\includegraphics[width=0.75\textwidth]{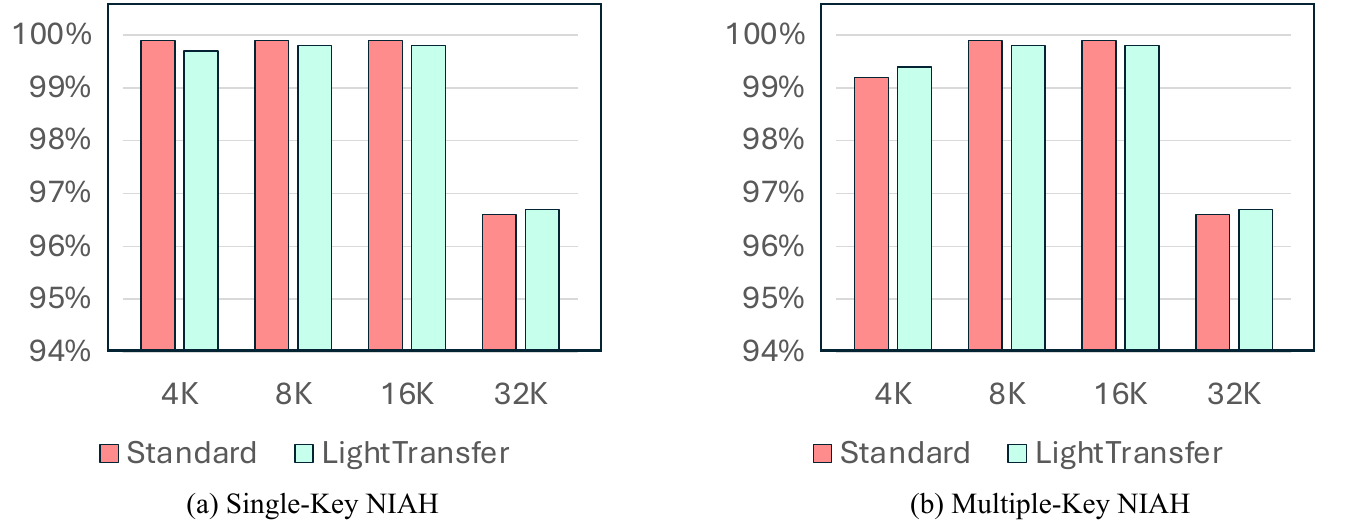}
\caption{Performance comparison of \textsc{LightTransfer} and standard model on NIAH tasks using
Mistral-7B-Instruct.}
\label{fig:ruler}
\end{figure}
\subsubsection{Experiments on NIAH}
\textbf{Settings.}  
We also evaluate whether \textsc{LightTransfer-Test} can preserve in-context retrieval capabilities while replacing some standard attention layers into memory-efficient streaming attention. 
The evaluation is conducted on single-key and multiple-key NIAH tasks collected in the Ruler~\citep{hsieh2024ruler} benchmark.
We report the performance 
with input context lengths of 4K, 8K, 16K, and 32K. 
Detailed experimental configurations can be found in
Appendix~\ref{sec:setting}.

\textbf{Results.}  Figure~\ref{fig:ruler} summarizes the performance on NIAH tasks, with the context length ranging from 4K to 32K. 
While our \textsc{LightTransfer-Test} replacing select transformer layers with streaming attention reduces memory overhead, strategically retaining original attention mechanisms in deeper layers ensures robust long-range dependency modeling. This explains the maintained performance on single-key tasks (32K: 96.7\% vs standard 96.6\%) and competitive multi-key results at 32K (78.2\% vs 78.9\%). The retained standard layers serve as an anchor for cross-token reasoning, which is crucial for in-context retrieval.

\begin{table}[t]
\centering
\caption{Performance comparison of \textsc{LightTransfer-Train} and baseline methods on three mathematical benchmarks using QwQ-32B. \textbf{Bold} denotes the best method, and \ul{underlined} denotes the second best.}
\vspace{.1cm}
\scalebox{1}{\begin{tabular}{lccc}
\toprule
\textbf{Method}&\textbf{MATH-OAI}&\textbf{AIME24}&\textbf{GSM8K}\\\midrule
QwQ-STILL&\ul{90.2}&\ul{46.7}&\textbf{95.6}\\
LongGen&78.2&16.7&95.4\\
\rowcolor{gray!30}
\textsc{LitTrans}&\textbf{90.7}&\textbf{53.3}&\ul{95.5}\\
\bottomrule
\end{tabular}}
\label{tbl:o1}
\vspace{-.3cm}
\end{table}
\begin{table}[t]
\centering
\caption{Performance comparison of \textsc{LightTransfer} and baseline methods that operate without tuning LLM parameters on mathematical
benchmarks using QwQ-32B. \textbf{Bold} denotes the best method.}
\begin{tabular}{lcc}
\toprule
\textbf{Method} & \textbf{MathOAI} & \textbf{AIME24} \\
\midrule
QwQ-STILL      & 90.2 & 46.7 \\
MiniCache-Test         & 20.0 & 0.0  \\
SqueezeAttention-Test                  & 82.6 & 15.0 \\
DuoAttn.-Train            & 79.2 & 33.3 \\
\rowcolor{gray!30}
\textsc{LightTransfer-Test (ours)} & 85.0 & 40.0 \\
\rowcolor{gray!30}
\textsc{LightTransfer-Train (ours)}& \textbf{90.7} & \textbf{53.3} \\
\bottomrule
\end{tabular}
\label{tab:aime_math_results}
\end{table}

\begin{figure}[t]
\centering 
\includegraphics[width=0.85\textwidth]{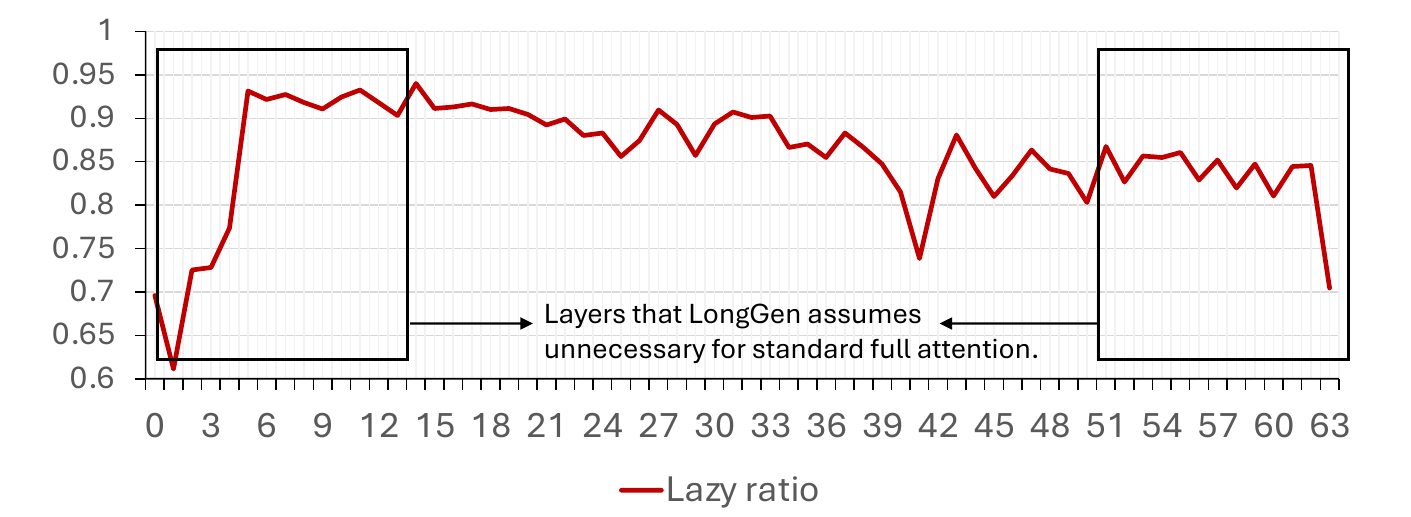}
\caption{Lazy ratio scores across layers in QwQ-32B-STILL.}
\label{fig:lazy_ratio}
\end{figure}

\begin{figure}[t]
\centering 
\includegraphics[width=0.85\textwidth]{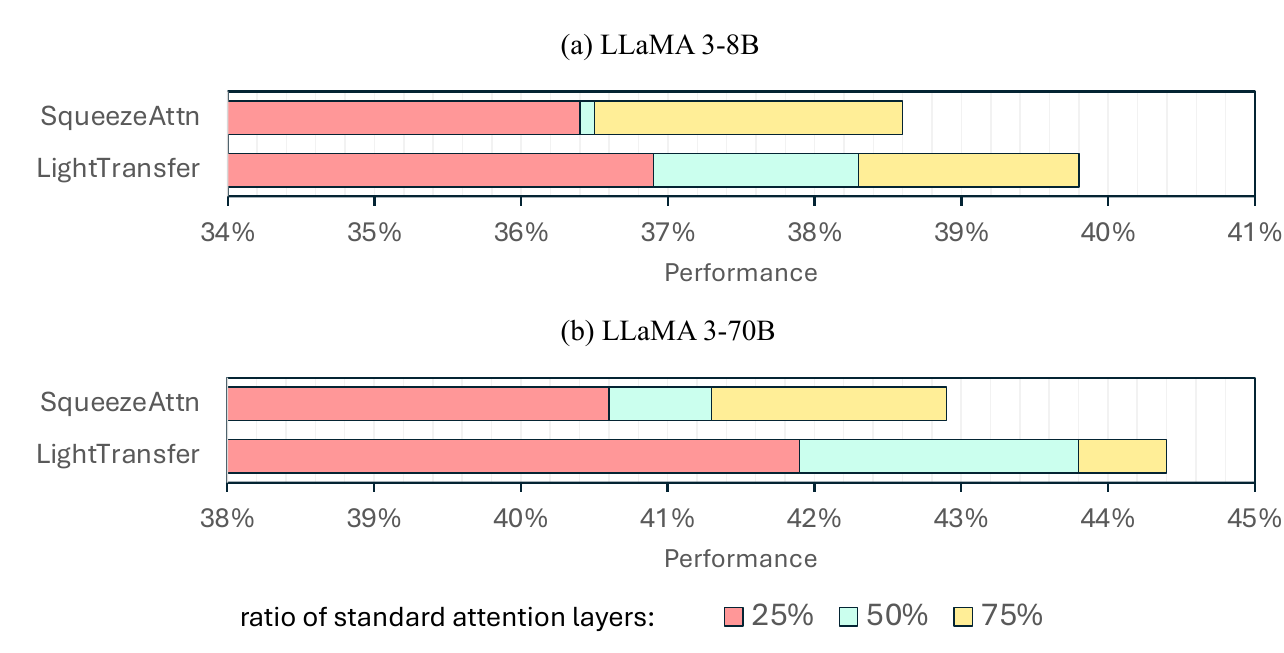}
\caption{Effect of retaining standard attention in more layers on LongBench.}
\label{fig:retaining_layer_ratio}
\end{figure}
\begin{table}[t]
\centering
\caption{Relative token-generation throughput at different sequence lengths (4K, 8K, 16K, and 32K) compared to the Full baseline. \textbf{Bold} denotes the best method.}
\label{tab:throughput}
\begin{tabular}{lcccc}
\toprule
\textbf{Method} & \textbf{4K} & \textbf{8K} & \textbf{16K} & \textbf{32K} \\
\midrule
SqueezeAtten. & 1.03$\times$ & 1.09$\times$ & 1.12$\times$ & 1.04$\times$ \\
MiniCache        & 1.26$\times$ & 1.29$\times$ & 1.52$\times$ & 1.41$\times$ \\
\rowcolor{gray!30}
\textsc{LightTransfer} & \textbf{1.44$\times$} & \textbf{1.78$\times$} & \textbf{2.17$\times$} & \textbf{1.75$\times$} \\
\bottomrule
\end{tabular}
\vspace{-0.3cm}
\end{table}

\begin{figure}
\centering
\includegraphics[width=0.85\textwidth]{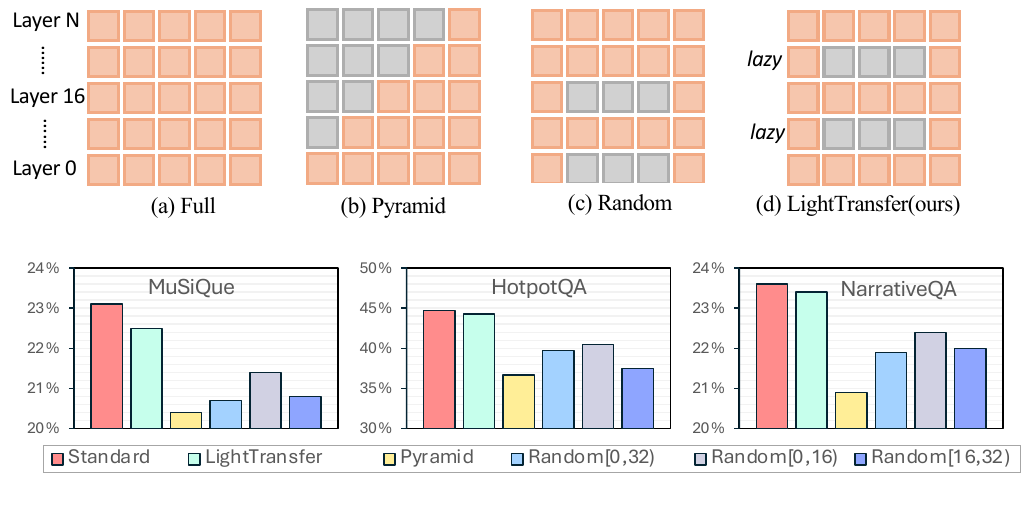}
\vspace{-0.4cm}
\caption{
Different layer replacement strategies and their performance on LLaMA3-8B-Instruct: 1)~Standard: Use standard attention in all layers. 
2)~Our \textsc{LightTransfer}: Dynamically identify lazy layers on the fly, and replace their attention mechanism accordingly.
3)~Pyramid: Replace each layer with memory-efficient attention; the budget decreases with depth, forming a pyramid-like structure. 4)~Random: Randomly replace layers with memory-efficient attention within the ranges  \([0,16)\), \([16,32)\), or \([0,32)\). 
We keep a \textbf{same} number of replaced layers, except Standard.\looseness=-1
}
\label{fig:abl_strategy}
\vspace{-0.6cm}
\end{figure}
\vspace{-0.4em}
\subsection{Experiments on o1-like Long Reasoning Tasks}
In these experiments, we investigate the effectiveness of \textsc{LightTransfer-Train} on o1-like long reasoning generation tasks. 
While these tasks feature relatively short inputs, they demand intricate reasoning. Consequently, we SFT the model with approximately 5K training examples to facilitate swift adaptation within the transferred hybrid architecture.

\textbf{Settings.} Experiments are conducted on three widely used mathematical benchmarks AIME24, MATH-OAI, and GSM8K.  We use greedy decoding to evaluate the performance of our model with maximum tokens set to 32K. 
We aim to experiment with models capable of generating CoT, such as QwQ. However, as the original training data for QwQ is not publicly available, we adopt the public training set provided by QwQ-STILL~\citep{min2024imitate}, a model demonstrated to achieve comparable performance to QwQ. Specifically, following QwQ-STILL, we apply a simple distillation approach using Qwen2.5-32B-Instruct as our base model.
We generally follow the original training set of QwQ-STILL, and replace 50\% of layers with streaming attention. 
To mitigate the training complexity of attention, we optimize \textsc{LightTransfer-Train} training using Flex Attention~\citep{dong2024flex}.

\textbf{Baselines.} We compare our \textsc{LightTransfer-Train} against the following baselines: 1) QwQ-STILL~\citep{min2024imitate}: a distilled model on Qwen2.5-32B-Instruct that achieves performance comparable to QwQ-32B-Preview, whose training data is publicly available. 2) LongGen~\citep{ge2024little}: an approach that assumes the layers at both ends of the model do not handle global information and predefines the replacement of those layers with sparse attention.

We further benchmark \textsc{LightTransfer-Test} against (i) existing \emph{training-free} sparsification baselines, and (ii) DuoAttention~\citep{xiao2024duoattention}.  
Unlike our approach, DuoAttention must first run on a \textit{separate calibration set} to pinpoint those attention heads whose KV cache should be evicted, whereas \textsc{LightTransfer} operates entirely \textit{calibration-free}.
\looseness=-1

\textbf{Results.} 
Table~\ref{tbl:o1} shows that \textsc{LightTransfer-Train} retains its performance on Math-OAI (+0.5\%), AIME24 (+6.6\%) and GSM8K (-0.1\%). 
In contrast, LongGen, which assumes its middle layers require standard attention, exhibits no drop on GSM8K but suffers a 30.0\% and 12.4\% decrease on AIME24 and Math-OAI, respectively. 
While the unchanged GSM8K results for LongGen may indicate that GSM8K poses lower complexity for these models, the broader comparisons nevertheless highlight the strength of our data-driven layer selection. 
Specifically, our \textsc{LightTransfer-Train} calculates each layer’s \emph{lazy ratio} (Figure~\ref{fig:lazy_ratio}) and replaces those exhibiting the highest, which is proven to be more robust than hand-crafted assumptions. 
Moreover, our findings underscore the existence of layer-level KV cache redundancy even in o1-like long reasoning models, emphasizing the promise of hybrid transformer architectures. Meanwhile, as shown in Table \ref{tab:aime_math_results}, \textsc{LightTransfer-Test} surpasses all other training-free baselines on both MathOAI and AIME24, highlighting its suitability for long reasoning generation tasks. 

\vspace{-0.4em}
\subsection{Ablation Studies \& Analysis}
\label{sec:abl}

\textbf{Standard layer retention ratio vs.\ model performance.}
As shown in Figure~\ref{fig:retaining_layer_ratio}, we systematically vary the fraction of layers that use original attention from 0.25 to 0.5, up to 0.75, for both LLaMA3-8B-Instruct and LLaMA3-70B-Instruct on LongBench benchmark. 
As expected, higher retention ratios consistently yield improved model performance.
However, this comes at the cost of increased memory consumption, highlighting the trade-off between efficiency and accuracy. 
Notably, across all compression settings examined, \textsc{LightTransfer-Test} surpasses the strongest baseline on that benchmark (i.e., SqueezeAttention), thereby underscoring the benefit of transitioning standard transformers to hybrid models via strategical designs for more efficient generation.

\textbf{Throughput of token generation.}
To evaluate how these memory optimizations impact token-generation throughput, we conduct experiments with Mistral-7B on the Ruler benchmark under maximum batch-size configurations. Input sequence lengths of 4K, 8K, 16K, and 32K were tested while retaining 50\% of the standard attention layers.
As shown in Table~\ref{tab:throughput}, \textsc{LightTransfer-Test} consistently achieves the highest throughput compared with other training-free test time inter-layer KV cache reduction methods.
In contrast, SqueezeAttention, despite having the same compression ratio, fails to reduce peak memory usage during the prefilling phase, since it must complete prefilling for all layers before applying compression. This constraint limits the feasible batch size, restricting potential throughput. 
Meanwhile, MiniCache exhibits lower throughput due to its smaller compression ratio (i.e., removing KV caches in at most 25\% of layers). 
These findings underscore the effectiveness of \textsc{LightTransfer} in balancing memory usage and computational efficiency.


\textbf{Effect of different layer replacement strategies.} 
As shown in Figure~\hyperref[fig:abl_strategy]{7 (a-d)}, we experiment with four different layer replacement strategies for integrating memory-efficient streaming attention into transformers, with consistent replacement counts (except Standard).
The results shown in Figure~\ref{fig:abl_strategy} indicate noticeable reductions for Pyramid and Random strategies, suggesting that the predefined expectations about each layer’s function may not fully align with their actual roles.
Moreover, the performance of our \textsc{LightTransfer}  surpasses other strategies, suggesting that \textsc{LightTransfer} is effective in reducing memory usage while maintaining performance.
Additional results with two extra baselines are provided in Appendix~\ref{sec: other_layer_replace}.

\vspace{-0.4em}
\section{Conclusion}
We present \textsc{LightTransfer}, a lightweight framework for transforming standard transformers into hybrid models for more efficient generation by identifying \textit{lazy} layers and replacing their full-attention modules with streaming attention. 
Extensive experiments show that even when half of the transformer layers are replaced with streaming attention, \textsc{LightTransfer} delivers up to a 2.17$\times$ increase in throughput while incurring less than a 1.5\% performance drop on LongBench. For advanced long reasoning generation tasks like AIME24, our method achieves these gains without any performance degradation on QwQ-STILL.

\bibliography{main}
\bibliographystyle{tmlr}

\appendix
\section{Settings.}
\label{sec:setting}
We adopt a generative format where answers are produced using greedy decoding for all tasks.
All the experiments are conducted using NVIDIA A100 using PyTorch and HuggingFace Transformers replaced with \texttt{flash\_attention\_with\_kvcache} to accelerate. All model weights, activations, and KV caches use BF16 precision, with no quantization applied.
We set the sink token num $w_\text{sink}=4$ and the window size $w_\text{recent}=1020$. 
\subsection{Settings on LongBench}
The input context window sizes of LLaMA2-7B-chat, Mistral-7B-Instruct, LLaMA3-8B-Instruct and LLaMA3-70B-Instruct are 4K, 8K, and 32K, with average tokenized sequence lengths approximately 13K, 12K, 10K, and 10K in LongBench. 
For evaluation, we use the metrics recommended by LongBench. 
Due to space constraints, we only include the performance of 16 randomly selected tasks out of the 21 LongBench tasks. 
For MiniCache, as the code was not open-sourced before our submission, we reimplemented it based on the original paper and the SLERP~\citep{shoemake1985animating} code it references. We followed all the hyper-parameters outlined in the paper, except for the number of retention tokens. 
SqueezeAttention and our \textsc{LightTransfer-Test time} are both set to the same compression ratio, equivalent to removing KV caches from 50\% of the layers (i.e., $P$ is set to 50\% of the total number of layers), whereas MiniCache is set to 25\% (i.e., its maximum possible compression).

\subsection{Settings on NIAH}
The evaluation is conducted using the metrics recommended by Ruler.
Because synthetic in-context retrieval tasks in the Ruler benchmark require more extensive global context, we use a slightly lower removal ratio here than the one applied to LongBench.
In our \textsc{LightTransfer-Test time} setup, we remove the KV caches from 25\% of the layers.
\section{Discussions}

\subsection{Relationships with Test-time KV Cache Reduction.}
\label{sec:kv_cache_reduction}
Some techniques~\citep{xiao2023efficient,li2024snapkv,wang2024model,zhang2024h2o,liu2024scissorhands,yang2024pyramidinfer,zhang2024pyramidkv} identify redundant tokens within each attention layer and evict their associated KV cache at test time, thereby effectively lowering memory usage. 
Within this line of research, the approach most closely aligned with our \textsc{LightTransfer-Test time} specifically targets layer-level KV cache redundancies during inference, aiming to further optimize memory consumption by examining how different layers store and reuse keys and values.
However, current methods only consider the relationships of KV caches across layers from a relatively coarse perspective for reducing KV caches across layers. 
For example, MiniCache~\cite{liu2024minicache} focuses on the similarity of KV caches between layers, while SqueezeAttention~\cite{wang2024squeezeattention} optimizes cache usage without a detailed investigation into the internal mechanisms of transformers.
In contrast, our \textsc{LightTransfer} approach goes further by examining how each layer functions and selectively replacing certain layers with more memory-efficient architectures.  

\subsection{Why We Do Not Adopt a Head-Wise Hybrid Model}\label{app:head}
Prior studies typically do not consider a head-wise hybrid design~\cite{lieber2024jamba,team2024gemma,sun2024yoco,botev2024recurrentgemma,de2024griffin}. One practical reason is that LLMs often employ tensor parallelism (TP) to distribute computation across multiple GPUs. In this setup, a single layer generally contains multiple attention heads (e.g., eight heads per layer), and each head is handled by a separate GPU. If different heads in the same layer maintain different KV cache sizes, GPUs with smaller caches must wait for those with larger caches to finish. This synchronization bottleneck cancels out any latency benefits gained from compressing only certain heads, making a head-wise hybrid approach inefficient in real-world deployments.

In addition, when using TP only, we observe that the head-wise hybrid model struggles to improve throughput. To make a fair comparison, we also implement both data-parallel attention + TP feed-forward network (DP+TP) method based on the state-of-the-art inference framework SGLang, and evaluate this strategy with pure TP on LLaMA3-70B using an 8$\times$A100 40G node.

Despite idealized settings (identical input lengths and perfect batch distribution with $\text{BSZ}\% \text{GPU}_{\text{num}} = 0$), DP+TP significantly reduces throughput---only \textbf{0.0735$\times$} that of TP—and reduces the maximum supported sequence length to only 1/128$\times$. The root cause is that TP shards attention layer parameters across GPUs, while DP+TP replicates them, consuming an additional 157.5 GB of GPU memory. Moreover, TP shards the KV cache \textit{per request}, allowing it to support 8$\times$ longer context lengths.

\vspace{0.5em}

\begin{table}[h]
\centering
\small
\caption{Throughput comparison (tokens/s) for LLaMA3-70B under different sequence lengths. DP+TP suffers from severe memory issues.}
\begin{tabular}{lccccc}
\toprule
\textbf{Length} & \textbf{512k} & \textbf{32k} & \textbf{16k} & \textbf{8k} & \textbf{4k} \\
\midrule
DP+TP (tokens/s) & OOM & OOM & OOM & OOM & 145.6 \\
TP (tokens/s)    & 22.6 & 360.0 & 772.1 & 1188.6 & 1981.9 \\
\bottomrule
\end{tabular}
\vspace{0.3em}

\end{table}
\vspace{-1em}

Another critical issue with DP+TP is its difficulty in achieving load balance, a concern also raised in the DeepSeekV3R1 Inference Report. In extreme imbalance cases, throughput under DP+TP decreases linearly with GPU count. While increasing batch size may alleviate this in short-context scenarios, it does not hold for long-context settings, where KV cache memory constraints restrict batch size (e.g., only 16 sequences at 32k tokens, even with TP).

Furthermore, in production deployments with \textit{prefill-decode separation (PD)}, DP+TP schedules workload balancing based on prefill lengths. We argue this heuristic fails in many real-world settings:

\begin{itemize}
    \item \textbf{Multi-turn conversations}: Response lengths in later turns are highly unpredictable, violating assumptions of balanced prefill length.
    \item \textbf{Long-form generation}: Chain-of-Thought models may generate over 16K tokens even from short prompts, again breaking static prefill balancing.
\end{itemize}

Finally, head-wise hybrid methods are hard to integrate with existing inference systems like vLLM and SGLang due to KV cache granularity constraints. In contrast, industry-leading pretrained hybrid models (e.g., Gemma-2/3) adopt a \textbf{layer-wise hybrid structure}, validating our design decision.

\begin{table}[!ht]
\centering
\caption{Performance under different hyperparameters.}
\label{tab:hyperparams}
\subfloat[Window size \(w_{\text{recent}}\)]{
    \begin{tabular}{c c c c c}
    \toprule
    \textbf{Window size} & \textbf{252} & \textbf{508} & \textbf{1020} & \textbf{2044} \\
    \midrule
    \textbf{Performance} & 39.5         & 39.8         & 39.8          & 40.1          \\
    \bottomrule
    \end{tabular}
}
\quad
\subfloat[Sink token count \(w_{\text{sink}}\)]{
    \begin{tabular}{c c c c c}
    \toprule
    \textbf{Sink num} & \textbf{0} & \textbf{2} & \textbf{4} & \textbf{6} \\
    \midrule
    \textbf{Performance} & 26.5     & 39.8       & 39.8       & 39.9       \\
    \bottomrule
    \end{tabular}
}
\quad
\subfloat[\(w_{\text{last}}\)]{
    \begin{tabular}{c c c c c}
    \toprule
    \boldmath\(w_{\text{last}}\) & \textbf{8} & \textbf{16} & \textbf{32} & \textbf{64} \\
    \midrule
    \textbf{Performance} & 39.9    & 39.8      & 39.9        & 39.7        \\
    \bottomrule
    \end{tabular}
}
\end{table}
\subsection{Additional Evidence for the Usefulness of the Bound}
\label{sec:exp_theory}

We measure the divergence between the output distributions of the standard and \textsc{LightTransfer} methods after prefilling with identical real-world samples. We employ the \emph{Kullback--Leibler (KL) divergence} as a principled metric to quantify distributional discrepancy.

\begin{table}[h]
\centering
\caption{KL divergence on seven LongBench datasets using LLaMA3-8B-Instruct. Lower values indicate closer alignment.}
\begin{tabular}{l r r}
\toprule
\textbf{Dataset} & \textbf{No layer reduction} & \textbf{50$\%$ layer reduction} \\
\midrule
2WikiMultihopQA & $-4\times10^{-6}$ & $-9\times10^{-6}$ \\
HotpotQA & $-6\times10^{-6}$ & $-1\times10^{-5}$ \\
MultiFieldQA-EN & $-4\times10^{-6}$ & $-8\times10^{-6}$ \\
MultiFieldQA-ZH & $-1.1\times10^{-5}$ & $-1.6\times10^{-5}$ \\
MuSiQue & $-6\times10^{-6}$ & $-1\times10^{-5}$ \\
NarrativeQA & $-1.1\times10^{-5}$ & $-1.5\times10^{-5}$ \\
Qasper & $-6\times10^{-6}$ & $-1\times10^{-5}$ \\
\midrule
\textbf{Average} & $\mathbf{-7.1\times10^{-6}}$ & $\mathbf{-1.2\times10^{-5}}$ \\
\bottomrule
\end{tabular}
\end{table}

The extremely small KL values, consistently approaching zero across diverse datasets, confirm the practical tightness of our theoretical bound. Furthermore, the KL divergence values increase with the reduction of layers, consistent with our theoretical predictions. Such negligible divergence indicates that \textsc{LightTransfer} faithfully preserves the original token-level predictive distribution during prefilling.

\subsection{Limitations}
Firstly, we acknowledge that the performance of the LightTransfer method in the test setting does not match its training performance. Improving the effectiveness of this LightTransfer-Test remains an important direction for future research. Secondly, we recognize that our training method constitutes fine-tuning, and thus does not fully exploit the potential of the hybrid model architecture. Due to constraints in available GPU resources and data, we were unable to comprehensively explore a fully native hybrid model. We believe that future work with enhanced computational resources could unlock additional capabilities inherent in this promising hybrid model structure.

\section{Additional Experiment Results.}
\subsection{Impact of Hyperparameters}
\label{sec:params}
We adopt these hyperparameters either directly from StreamingLLM~\cite{xiao2023efficient} (i.e., $w_\text{sink}$ and $w_\text{recent}$), ensuring consistency with established practices in the field, or through preliminary experiments (i.e., $w_\text{last}$). We also conducted additional experiments to analyze the impact of hyperparameters ($w_\text{sink}$, $w_\text{recent}$, and $w_\text{last}$) on model performance.
As shown in Table~\ref{tab:hyperparams}, the variation in performance remains within one percentage point across different configurations, demonstrating the robustness of our approach to hyperparameter choices.

\subsection{Combination with Intra-layer KV Cache Reduction Methods}

\begin{wrapfigure}{r}{0.3\textwidth}
\includegraphics[width=0.3\textwidth]{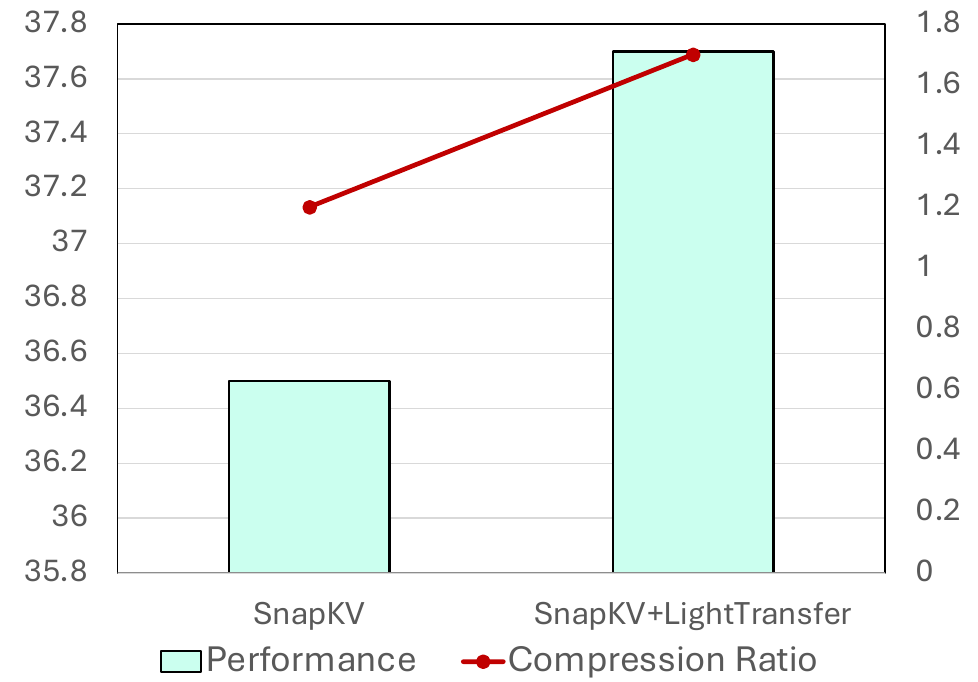}
\caption{Comparison of SnapKV and SnapKV+LightTransfer.}
\label{fig:comb}
\vspace{-0.5cm}
\end{wrapfigure}
To illustrate the orthogonality between our \textsc{LightTransfer-Test} and intra-layer KV cache compression methods, we conduct additional experiments that combine \textsc{LightTransfer-Test} with SnapKV (a cutting-edge method for intra-layer KV cache reduction). In these experiments, SnapKV is applied to compress the KV cache in non-lazy layers, while \textsc{LightTransfer-Test} remains active for lazy layers. We use Qwen2.5-3B-chat-32K for this analysis. As shown in Figure~\ref{fig:comb}, leveraging \textsc{LightTransfer-Test} alongside an intra-layer KV cache compression method can further reduce KV cache size while preserving model performance, underscoring \textsc{LightTransfer-Test}’s orthogonality to existing methods focused on intra-layer redundancies.

\subsection{Comparison with Head-wise KV Cache Reduction Methods}
We use 50\% sparsity across all methods and evaluate on LLaMA3-8B-Instruct-Gradient-1048K to align with DuoAttention’s released checkpoint.

Under the same sparsity level, our method is on par with DuoAttention~\citep{xiao2024duoattention}, which requires training and is head-wise. FastGen~\citep{ge2023model} will be OOM on most datasets because it is incompatible with flash attn~\citep{xiao2024duoattention}. RazorAttn~\citep{tang2024razorattention} and HeadKV~\citep{fu2024not} are highly sensitive to the calibration set. Although we follow their reported calibration setup, the results are hard to reproduce. Therefore, we use the full LongBench dataset (input only) for their calibration. Unlike~\citep{xiao2024duoattention, ge2023model,tang2024razorattention,fu2024not}, our search strategy only needs one additional matmul operation, which is fast. Thus, LightTransfer does not need any calibration set and can perform on-the-fly search, thereby avoiding sensitivity issues.

\begin{table}[htbp]
  \centering
  \caption{Comparison with Head-wise KV Cache Reduction Methods on LLaMA3-8B-Instruct-Gradient-1048K.}
  \label{tab:comparison_trainingfree}
  \begin{tabular}{lcccc}
    \toprule
    \textbf{Dataset} & \textbf{Razor} & \textbf{HeadKV} & \textbf{Ours} & \textbf{DuoAttn} \\
    \midrule
    Training Free & Yes & Yes & Yes & No \\
    \midrule
    qasper         & 19.69 & 29.97 & 26.85 & 27.02 \\
    multiqa\_en    & 27.62 & 30.93 & 48.26 & 53.69 \\
    hotpotqa       & 23.98 & 26.96 & 37.07 & 35.52 \\
    2wiki          & 24.83 & 25.70 & 30.41 & 28.08 \\
    multinews      & 25.84 & 27.31 & 26.50 & 27.76 \\
    trec           & 55.50 & 58.50 & 66.00 & 69.00 \\
    triviaqa       & 63.74 & 78.54 & 87.68 & 87.32 \\
    samsum         & 40.10 & 39.70 & 41.04 & 41.13 \\
    pcount         & 2.26  & 0.39  & 2.00  & 2.00  \\
    lcc            & 32.19 & 34.54 & 41.20 & 39.24 \\
    repo-p         & 32.15 & 36.06 & 40.02 & 40.04 \\
    \midrule
    \textbf{Average} & \textbf{31.63} & \textbf{35.33} & \textbf{40.64} & \textbf{40.98} \\
    \bottomrule
  \end{tabular}
\end{table}

\subsection{Performance on MoE architectures}
We conducted additional experiments using the Qwen1.5-MoE-14.3B-A2.7B~\citep{qwen_moe} model on tasks within LongBench. As shown in Table~\ref{tab:method_comparison}, when replacing 50\% of its layers with streaming attention, LightTransfer exhibits a performance drop of less than 1\% on the MoE architecture, outperforming other layer-wise KV cache pruning baselines. This observation is consistent with our findings on non-MoE transformer models, thereby further confirming the robustness and effectiveness of our approach.
\begin{table}[htbp]
  \centering
  \caption{Performance on Qwen1.5-MoE-14.3B-A2.7B.}
  \label{tab:method_comparison}
  \begin{tabular}{lcccc}
    \toprule
    \textbf{Dataset} & \textbf{Standard} & \textbf{MiniCache} & \textbf{SqueezeAttn} & \textbf{LitTrans} \\
    \midrule
    qasper            & 30.19 & 18.66 & 23.48 & 26.92 \\
    multifieldqa\_en  & 38.47 & 24.14 & 29.58 & 37.23 \\
    hotpotqa          & 10.17 & 5.57  & 8.56  & 9.12  \\
    2wikimqa          & 11.92 & 7.64  & 11.35 & 12.78 \\
    multi\_news       & 24.77 & 20.09 & 20.94 & 24.63 \\
    trec              & 68.00 & 66.00 & 64.50 & 65.50 \\
    triviaqa          & 86.35 & 70.44 & 84.80 & 85.35 \\
    samsum            & 38.09 & 24.87 & 38.22 & 37.74 \\
    passage\_count    & 1.67  & 2.04  & 3.30  & 3.10  \\
    lcc               & 48.33 & 33.38 & 44.90 & 48.68 \\
    repobench-p       & 40.89 & 22.09 & 36.62 & 38.33 \\
    \midrule
    \textbf{Average}  & \textbf{36.26} & \textbf{26.81} & \textbf{33.30} & \textbf{35.43} \\
    \bottomrule
  \end{tabular}
\end{table}
\subsection{Comparasion with other layer layer replacement strategies}
\label{sec: other_layer_replace}
To further study the effectiveness of our approach, we benchmark it against the following two baselines :
\begin{enumerate}[label=(\roman*)]
    \item \textbf{Shapley value–based} \citep{zhang2024investigating}, which replaces layers according to their estimated Shapley contribution;
    \item \textbf{BERTology}~\citep{rogers2021primer}, which iteratively substitutes the least influential layers.
\end{enumerate}
Table~\ref{tab:layer_replacement} summarises the performance on three tasks from LongBench.  Our \textbf{LightTransfer} strategy attains the best performance on all datasets, surpassing the next-best baseline by up to $+3.7\%$  (HotpotQA) while maintaining the same inference budget. 

\begin{table}[htbp]
  \centering
  \caption{Comparison of different layer-replacement strategies on three datasets on LLaMa3-8B-Instruct.}
  \begin{tabular}{lccc}
    \toprule
    \textbf{Method} & \textbf{HotpotQA} & \textbf{MuSiQue} & \textbf{NarrativeQA} \\
    \midrule
    Shapley value–based & 39.97 & 18.18 & 19.63 \\
    BERTology & 42.48 & 18.13 & 20.91 \\
    LightTransfer & \textbf{43.70} & \textbf{20.90} & \textbf{23.20} \\
    \bottomrule
  \end{tabular}
  
  \label{tab:layer_replacement}
\end{table}
\section{More Examples}
\subsection{Examples about Layer Behavior across Tokens}
\begin{figure}
\centering
\vspace{-.2cm}
\subfloat[Example 0]{
\includegraphics[width=0.45\textwidth]{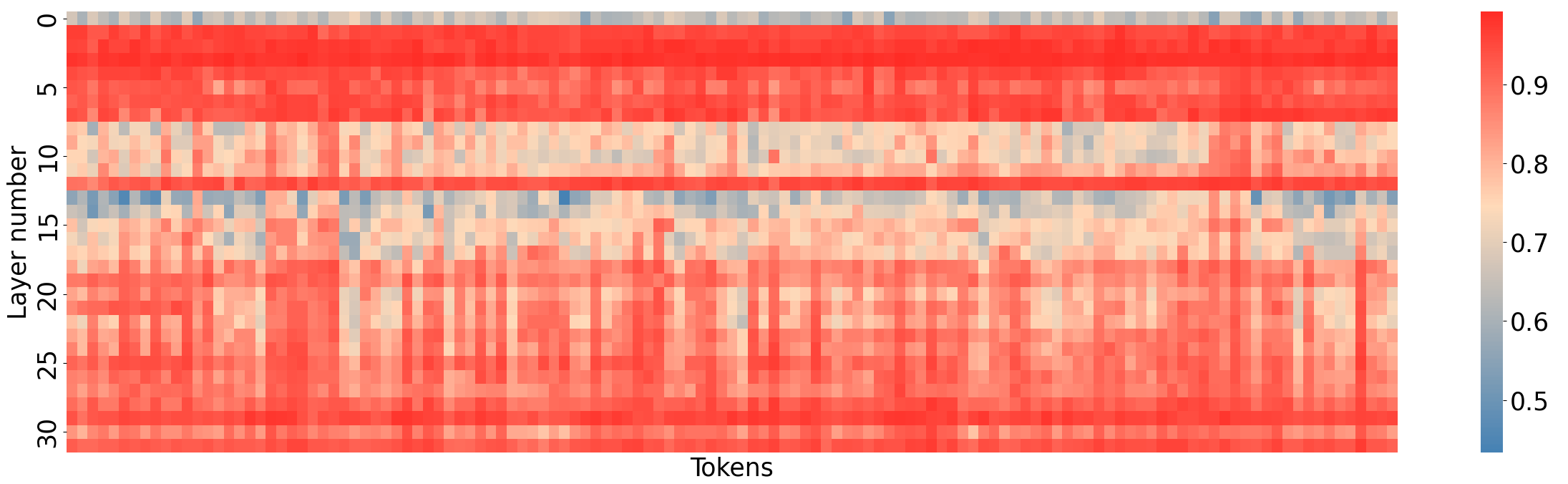}
}
\hspace{0.1cm}
\subfloat[Example 1]{
\includegraphics[width=0.45\textwidth]{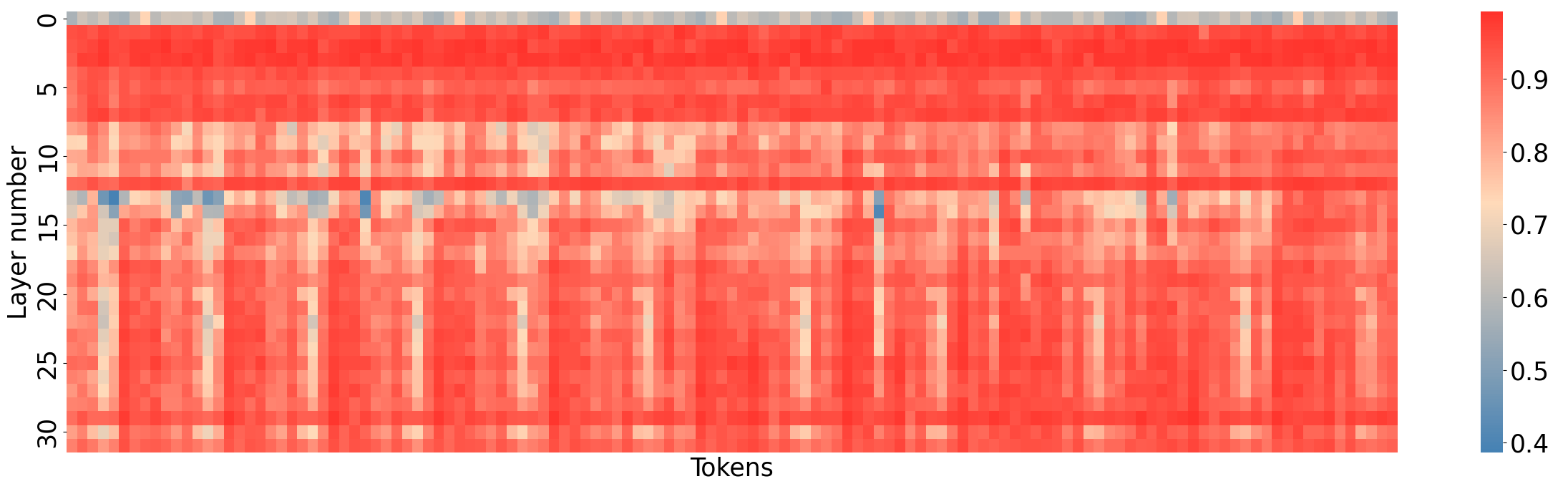}
}
\vspace{-0.1cm}
\caption{
Additional examples of layer behavior across tokens.\looseness=-1
}
\label{fig:app_example}
\end{figure}
Additional examples of layer behavior across tokens for a given input can be found in Figure~\ref{fig:app_example}. The examples are randomly chosen from LongBench benchmarks. The analysis is conducted using LLaMA3-8B-Instruct.

\section{Notation}
For a positive integer $N\in\bbN$, we define the set $[N]=\{1,\cdots,N\}$. For a vector $x\in\bbR^{d}$, we adopt $\|\cdot\|_{p}$ to denote the $\ell_{p}$ norm of vectors. For a matrix $X=[x_{1}^{\top},\cdots,x_{d_{1}}^{\top}]^{\top}\in\bbR^{d_{1}\times d_{2}}$, where $x_{i}\in\bbR^{d_{2}}$ for $i=1,\cdots,d_{1}$, we define the $\ell_{p,q}$-norm of $X$ as $\|X\|_{p,q}=\|[\|x_{1}\|_{p},\cdots,\|x_{d_{1}}\|_{p}]\|_{q}$, i.e., we first apply $\ell_{p}$ norm in a row-wise manner and then apply $\ell_{q}$ norm. The Frobenius norm $\|\cdot\|_{2,2}$ is also denoted as $\|\cdot\|_{\rmF}$. For a matrix $X\in\bbR^{a\times b}$, its $i$-th row and $i$-th column are denoted as $[X]_{i,:}$ and $[X]_{:,i}$, respectively. The element at $i$-th row and $j$-th column of $X$ is denoted as $[X]_{i,j}$.
\section{Theoretical Analysis}\label{app:theory}
In this section, we provide the theoretical analysis of the proposed method. We first define the transformer structure we analyze in this paper. In fact, we analyze the LLaMA-type structure~\citep{dubey2024llama}, i.e., the transformers that adopt the pre-norm and the res-link. The input of the transformer is the embedding of the tokens $X\in\bbR^{N\times d}$, where $N$ is the number of tokens, and $d$ is the dimension of the token embedding. We consider a $L$-layer transformer, i.e., there are $L$ transformer blocks in the network. Each transformer block consists of a \ac{mha} and a \ac{ff} module. The \ac{mha} module is a combination of multiple causal self-attention modules. Each causal self-attention module is defined as 
\begin{align*}
\att(X,W_{Q},W_{K},W_{V})=\sm(XW_{Q}W_{K}^{\top}X^{\top}+M)XW_{V},
\end{align*}
where $X\in\bbR^{N\times d}$ is the input, $W_{Q},W_{K}\in\bbR^{d\times d_{k}}$ and $W_{V}\in\bbR^{d\times d}$ are the weights of the self-attention module, and $M\in\bbR^{N\times N}$ is the causal mask. The causal mask is defined as
\begin{align*}
    [M]_{i,j}=\left\{
    \begin{array}{rcl}
    0       &      & {\text{if } i\geq j}\\
    -\infty   &     & {\text{otherwise.}}
    \end{array} \right.
\end{align*}
The \ac{mha} with $H$ heads is defined as 
\begin{align*}
    \mha\Big(X,\{W_{Q,h},W_{K,h},W_{V,h}\}_{h=1}^{H}\Big)=\sum_{h=1}^{H}\sm(XW_{Q,h}W_{K,h}^{\top}X^{\top}+M)XW_{V,h},
\end{align*}
where $X\in\bbR^{N\times d}$ is the input, $W_{Q,h},W_{K,h}\in\bbR^{d\times d_{k}}$ and $W_{V,h}\in\bbR^{d\times d}$ are the weights of the $h$-th head of \ac{mha}. Here we just merge the parameter $W_{O}$ into $W_{V}$ for ease of notation. Our analysis can be directly applied to the parameterization that explicitly includes $W_{O}$ as a weight. The \ac{ff} module applies transformations to $X$ in a row-wise manner, which can be defined as
\begin{align*}
    \ff(X,W_{A,1},W_{A,2})=\sigma(XW_{A,1})W_{A,2},
\end{align*}
where $W_{A,1},W_{A,2}\in\bbR^{d\times d}$ are weights of \ac{ff} module, and $\sigma(\cdot)$ is an element-wise activation function. For example, $\sigma$ can be $\mathsf{ReLU}$ function. We require that $\sigma$ is a Lipschitze function.
\begin{assumption}\label{assump:lips}
    The activation function $\sigma(\cdot)$ is $L_{\lip}$-Lipschitze, i.e., $|\sigma(x)-\sigma(y)|\leq L_{\lip}|x-y|$ for any $x,y\in\bbR$.
\end{assumption}
We note that this assumption is satisfied by all the popular activation functions, including ReLU, sigmoid, ELU, and GELU. The input of the transformer is denoted as the output of the $0$-th layer, i.e., $X^{(0)}=X$. Then the $i$-th block processes in the input $X^{(i-1)}$ as
\begin{align}
    Y^{(i)} &= X^{(i-1)} +\mha\Big(\lnor (X^{(i-1)}),\{W_{Q,h}^{(i)},W_{K,h}^{(i)},W_{V,h}^{(i)}\}_{h=1}^{H}\Big)\label{eq:attn_b}\\
    X^{(i)}&=Y^{(i)}+\ff\big(\lnor (Y^{(i)}),W_{A,1}^{(i)},W_{A,2}^{(i)}\big),\label{eq:ff_b}
\end{align}
where the superscript $(i)$ denotes the parameters and hidden states at layer $i$, and $\lnor$ is the row-wise normalization of the input. To simplify the mathematical calculation, we defined $\lnor$ as 
\begin{align*}
    \lnor(x)= 
    \left\{
\begin{array}{rcl}
x       &      & {\text{if } \|x\|_{2}\leq 1}\\
x/\|x\|_{2}    &      & {\text{otherwise}}
\end{array} \right.
\end{align*}
Our analysis can be directly applied to the LayerNorm function of PyTorch. For ease of notation, we will abbreviate $\mha(\cdot,\{W_{Q,h}^{(i)},W_{K,h}^{(i)},W_{V,h}^{(i)}\}_{h=1}^{H})$ and $\ff(\cdot,W_{A,1}^{(i)},W_{A,2}^{(i)})$ as $\mha^{(i)}(\cdot)$ and $\ff^{(i)}(\cdot)$ in the following. The output logits of the transformer is 
\begin{align*}
    X^{(L+1)}=X^{(L)}W_{\rm{unemb}},
\end{align*}
where $W_{\rm{unemb}}\in\bbR^{d\times d_{\rm{vocab}}}$ is the unembedding matrix. We would like to adopt the last row of $X^{(L+1)}$ to decode the next token. The parameters of the whole transformer is denoted as $\theta=\{W_{Q,h}^{(i)},W_{K,h}^{(i)},W_{V,h}^{(i)}\}_{i,h=1}^{L,H}\cup\{W_{A,1}^{(i)},W_{A,2}^{(i)}\}_{i=1}^{L}\cup\{W_{\rm{unemb}}\}$. Then the whole transformer is denoted as
\begin{align*}
    X^{(L+1)}=\llm(X,\theta).
\end{align*}

In our method, we will apply a mask on the \ac{mha} in some layers, where we only remain the first and last several tokens. This can be described by defined the masked indexes set $\calM_{i}\subseteq[i]$ for $i$-row for $i\in[N]$. The corresponding mask $M_{\sink}$ can be defined as 
\begin{align*}
    [M_{\sink}]_{i,j}=\left\{
\begin{array}{rcl}
0       &      & {\text{if } j\notin \calM_{i}}\\
-\infty    &      & {\text{otherwise.}}
\end{array} \right.
\end{align*}
For example, in our experiments, we set $\calM_{i}$ as the first 4 and the last 1020 tokens. Then we denote the corresponding \ac{mha} as 
\begin{align*}
    \tilmha\Big(X,\{W_{Q,h},W_{K,h},W_{V,h}\}_{h=1}^{H}\Big)=\sum_{h=1}^{H}\sm(XW_{Q,h}W_{K,h}^{\top}X^{\top}+M_{\sink})XW_{V,h}.
\end{align*}
The $\widetilde{\mha}$ module at $i$-th layer will be denoted as $\widetilde{\mha}^{(i)}$. The \ac{ff} module will remain the same in the our method. We denote the set of indexes of the layers that apply this mask as $\calI$. Then our method can be expressed as
\begin{align*}
    \tilY^{(i)} &= \tilX^{(i-1)} +\bbI\{i\notin\calI\}\cdot\mha^{(i)}\big(\lnor(\tilX^{(i-1)})\big)+\bbI\{i\in\calI\}\cdot\tilmha^{(i)}\big(\lnor(\tilX^{(i-1)})\big),
\end{align*}
where we denote all the hidden states with our method applied as $\tilX$ and $\tilY$, and $\bbI\{\cdot\}$ is the indicator function. The output of the whole network is denoted
\begin{align*}
    \tilX^{(L+1)}=\widetilde{\llm}(X,\theta,\calI).
\end{align*}
To derive the theoretical analysis of the error, we need to delineate the norm of the transformer parameters. In fact, all the transformers in the real life have bounded parameters due to the calculation and storage requirements of the computer.
\begin{assumption}\label{assump:bd}
    The Frobenius norms of all the parameters of the transformer is upper bounded by $B>0$, i.e., $\|W_{Q,h}^{(i)}\|_{\rmF}\leq B$, $\|W_{K,h}^{(i)}\|_{\rmF},\leq B$, $\|W_{V,h}^{(i)}\|_{\rmF}\leq B$, $\|W_{A,2}^{(i)}\|_{\rmF}\leq B$, $ \|W_{A,1}^{(i)}\|_{\rmF}\leq B$, $\|W_{\rm{unemb}}\|_{\rmF}\leq B$ for $h\in[H]$ and $i\in[L]$.
\end{assumption}
To state our main result, we define the maximal \emph{sum} of the original attention scores of the discarded tokens at layer $l\in\calI$ as $s_l$, which is formally defined as 
\begin{align*}
    s_{l} &= \max_{i\in[N]}\frac{1}{H}\sum_{h=1}^{H}\bigg(1-\sum_{j\notin \calM_{i}} \frac{\exp\big(\big[\lnor (X^{(l-1)})\big]_{i,:}W_{Q,h}^{(i)}W_{K,h}^{(i),\top}\big[\lnor (X^{(l-1)})^{\top}\big]_{:,j}\big)}{\sum_{k=1}^{i}\exp\big(\big[\lnor (X^{(l-1)})\big]_{i,:}W_{Q,h}^{(i)}W_{K,h}^{(i),\top}\big[\lnor (X^{(l-1)})^{\top}\big]_{:,k}\big)}\bigg)\\
    &=\max_{i\in[N]}\frac{1}{H}\sum_{h=1}^{H}\sum_{j\in \calM_{i}} \frac{\exp\big(\big[\lnor (X^{(l-1)})\big]_{i,:}W_{Q,h}^{(i)}W_{K,h}^{(i),\top}\big[\lnor (X^{(l-1)})^{\top}\big]_{:,j}\big)}{\sum_{k=1}^{i}\exp\big(\big[\lnor (X^{(l-1)})\big]_{i,:}W_{Q,h}^{(i)}W_{K,h}^{(i),\top}\big[\lnor (X^{(l-1)})^{\top}\big]_{:,k}\big)}.
\end{align*}
Then the main result is as follows. 
\begin{theorem}\label{thm:err_analysis}
    We define the difference of the hidden states of our method and the original transformer at layer $i\in[L]$ as $e_{X}^{(i)}=\|X^{(i)}-\tilX^{(i)}\|_{2,\infty}$. Under Assumptions~\ref{assump:lips} and \ref{assump:bd}, this error involves as
    \begin{align}
        e_{X}^{(i)}\leq e_{X}^{(i-1)}+\big(HB+L_{\lip}B^{2}+4HB^{3}\big)\min\big\{2,\big[1+HB(1+4B^{2})\big]e_{X}^{(i-1)}\big\}+2H(B+L_{\lip}B^{3})\bbI\{i\in\calI\}s_{i}.\label{ieq:recur}
    \end{align}
    The error between the logits generated by our method and the original transformer can be upper-bounded as
    \begin{align}
        \Big\|\widetilde{\llm}(X,\theta,\calI)-\llm(X,\theta)\Big\|_{2,\infty} \leq 2LB^{2}\big(H+L_{\lip}B+4HB^{2}\big)+2HB^{2}(1+L_{\lip}B^{2})\sum_{i\in\calI}s_{i}.\label{ieq:logit}
    \end{align}
\end{theorem}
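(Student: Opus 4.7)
I plan to track the residual-stream error $e_X^{(i)}=\|X^{(i)}-\tilX^{(i)}\|_{2,\infty}$ layer by layer. Each block contributes a Lipschitz piece (proportional to $e_X^{(i-1)}$) plus, when $i\in\calI$, an additive perturbation proportional to the discarded attention mass $s_i$. Because every $\mha$, $\tilmha$, and $\ff$ is preceded by $\lnor$, their outputs satisfy uniform absolute bounds, so the $\min\{2,\cdot\}$ in Eqn.~\eqref{ieq:recur} will arise by taking the pointwise minimum of the absolute and Lipschitz estimates; this device is precisely what turns what would otherwise be an exponential-in-$L$ error amplification into the linear-in-$L$ bound in Eqn.~\eqref{ieq:logit}.

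\textbf{Building blocks.} Using Assumptions~\ref{assump:lips}--\ref{assump:bd} I would first prove five elementary facts: (i) $\lnor$ is $1$-Lipschitz in $\|\cdot\|_{2,\infty}$ and $\|\lnor(\cdot)\|_{2,\infty}\le 1$; (ii) for any $U$ with $\|U\|_{2,\infty}\le 1$, $\|\mha(U)\|_{2,\infty}\le HB$ (row-stochasticity of $\sm$ together with $\|UW_V\|_{2,\infty}\le B$) and $\|\ff(U)\|_{2,\infty}\le L_{\lip}B^{2}$; (iii) the Lipschitz estimate $\|\mha(U)-\mha(U')\|_{2,\infty}\le HB(1+4B^{2})\|U-U'\|_{2,\infty}$ on the unit ball, derived via the split
\[
\sm(A)UV-\sm(A')U'V=\sm(A')(U-U')V+(\sm(A)-\sm(A'))UV,
\]
the row-wise softmax bound $\|\sm(A)_{i,:}-\sm(A')_{i,:}\|_1\le 2\|A_{i,:}-A'_{i,:}\|_{\infty}$ (from the Jacobian formula $\operatorname{diag}(p)-pp^\top$), and the bilinear estimate $\|A-A'\|_{\infty,\infty}\le 2B^{2}\|U-U'\|_{2,\infty}$ which gives $B$ on the second term and $4B^{3}$ on the first, summed over $H$ heads; (iv) the analogous Lipschitz constant $L_{\lip}B^{2}$ for $\ff$; (v) the sink perturbation $\|\tilmha(U)-\mha(U)\|_{2,\infty}\le 2HBs_i$, since masking KV pairs whose original softmax weight sums to $s_i$ perturbs each head's convex-combination output (of vectors of norm $\le B$) by at most $2Bs_i$.

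\textbf{One-layer recursion.} From Eqns.~\eqref{eq:attn_b}--\eqref{eq:ff_b} and their $\tilde{\cdot}$ analogues,
\[
\tilX^{(i)}-X^{(i)}=(\tilX^{(i-1)}-X^{(i-1)})+\Delta_{\att}^{(i)}+\Delta_{\ff}^{(i)}.
\]
For $i\in\calI$ I triangulate $\Delta_{\att}^{(i)}$ through $\tilmha^{(i)}(\lnor(X^{(i-1)}))$, applying (iii) to the first piece and (v) to the second; combining with the absolute bound $\|\lnor(\tilX^{(i-1)})-\lnor(X^{(i-1)})\|_{2,\infty}\le 2$ from (i) gives $\|\Delta_{\att}^{(i)}\|_{2,\infty}\le HB(1+4B^{2})\min\{2,e_X^{(i-1)}\}+2HB\,\bbI\{i\in\calI\}s_i$. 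Since $\tilY^{(i)}-Y^{(i)}=(\tilX^{(i-1)}-X^{(i-1)})+\Delta_{\att}^{(i)}$, (iv) plus $\|\lnor(\tilY^{(i)})-\lnor(Y^{(i)})\|_{2,\infty}\le\min\{2,\|\tilY^{(i)}-Y^{(i)}\|_{2,\infty}\}$ and the elementary inequality $\min\{a,b+c\}\le\min\{a,b\}+c$ for $c\ge 0$ give $\|\Delta_{\ff}^{(i)}\|_{2,\infty}\le L_{\lip}B^{2}\min\{2,[1+HB(1+4B^{2})]e_X^{(i-1)}\}+2HL_{\lip}B^{3}\,\bbI\{i\in\calI\}s_i$. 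Enlarging the inner argument of the attention $\min$ from $e_X^{(i-1)}$ to $[1+HB(1+4B^{2})]e_X^{(i-1)}$ (trivially valid) and adding the two bounds produces the Lipschitz coefficient $HB(1+4B^{2})+L_{\lip}B^{2}=HB+L_{\lip}B^{2}+4HB^{3}$ and sink coefficient $2H(B+L_{\lip}B^{3})$, reproducing Eqn.~\eqref{ieq:recur}.

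\textbf{Unrolling and main obstacle.} Starting from $e_X^{(0)}=0$ and bounding $\min\{2,\cdot\}\le 2$, summing Eqn.~\eqref{ieq:recur} from $i=1$ to $L$ yields $e_X^{(L)}\le 2L(HB+L_{\lip}B^{2}+4HB^{3})+2H(B+L_{\lip}B^{3})\sum_{i\in\calI}s_i$. Since $\widetilde{\llm}(X,\theta,\calI)-\llm(X,\theta)=(\tilX^{(L)}-X^{(L)})W_{\rm{unemb}}$ and $\|W_{\rm{unemb}}\|_{\rmF}\le B$, multiplying by $B$ delivers Eqn.~\eqref{ieq:logit}. The most delicate step is fact (iii): because the pre-softmax logits depend bilinearly on the (normalized) input and the softmax couples all rows through its normalizer, one must simultaneously control the logit perturbation (the $2B^{2}$ factor) and the Jacobian-based $\ell_\infty$-to-$\ell_1$ softmax perturbation (the factor $2$), yielding the $(1+4B^{2})$ term. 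Any slack there would enter the recursion multiplicatively; the $\min\{2,\cdot\}$ structure from $\lnor$-boundedness is what prevents the resulting $[1+HB(1+4B^{2})]^{L}$ amplification and preserves the linear-in-$L$ dependence in the final logit bound.
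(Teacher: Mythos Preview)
Your proposal is correct and follows essentially the same route as the paper: a layer-wise recursion combining the Lipschitz estimates for $\mha$ and $\ff$ on the $\lnor$-unit ball with the additive sink perturbation, then unrolling via $\min\{2,\cdot\}\le 2$ and multiplying by $\|W_{\rm unemb}\|_{\rmF}\le B$. The one small difference is the direction of the attention triangulation: you pass through $\tilmha^{(i)}(\lnor(X^{(i-1)}))$, whereas the paper passes through $\mha^{(i)}(\lnor(\tilX^{(i-1)}))$; your choice is arguably cleaner because it applies the sink bound (v) exactly at the original hidden state where $s_i$ is defined, but note that it then requires the Lipschitz constant in (iii) to hold for $\tilmha$ as well as $\mha$ --- which it does by the identical argument, though you should say so explicitly.
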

We note that the error recursive expression consists of three terms. The first term represents the error from the previous layer. The second term represents the error from the previous layer amplified by the current layer. Thanks to the layer normalization, this term will be truncated by $2$. The last term represents the newly introduced error if we shorten KV cache at the current layer. By relaxing this recursive formula, we derive the upper bound of the error between logits of our method and the original transformer. This shows that the error is upper bounded by the sum of the attention scores of the removed KV pairs up to an additive constant.
\begin{proof}[Proof of Theorem~\ref{thm:err_analysis}]
    We derive the error analysis of our analysis in three steps.
    \begin{itemize}
        \item The error decomposition of the whole network.
        \item Bound each term in the error decomposition.
        \item Conclude the proof.
    \end{itemize}

    \textbf{Step 1: The error decomposition of the whole network.}

    We derive the error decomposition of the whole network in a recursive manner. In fact, for the $i$-th layer, we have that
    \begin{align}
        \|\tilX^{(i)}-X^{(i)}\|_{2,\infty} &\leq \|\tilY^{(i)}-Y^{(i)}\|_{2,\infty}+\Big\| \ff^{(i)}\big(\lnor(\tilY^{(i)})\big)-\ff^{(i)}\big(\lnor(Y^{(i)})\big)\Big\|_{2,\infty}\nonumber\\
        \|\tilY^{(i)}-Y^{(i)}\|_{2,\infty} &\leq \|\tilX^{(i-1)}-X^{(i-1)}\|_{2,\infty}\label{ieq:recur_1}\\
        &\quad +\Big\| \bbI\{i\notin\calI\}\cdot\mha^{(i)}\big(\lnor(\tilX^{(i-1)})\big)+\bbI\{i\in\calI\}\cdot\tilmha^{(i)}\big(\lnor(\tilX^{(i-1)})\big)-\mha^{(i)}\big(\lnor(X^{(i-1)})\big)\Big\|_{2,\infty},\label{ieq:recur_2}
    \end{align}
    where the inequalities follow from the triangle inequality. In addition, we have that
    \begin{align}
        &\Big\|\widetilde{\llm}(X,\theta,\calI)-\llm(X,\theta)\Big\|_{2,\infty}\leq \|W_{\text{unemb}}\|_{\rmF}\cdot\|X^{(L)}-\tilX^{(L)}\|_{2,\infty},\label{ieq:unemb}
    \end{align}
    where the inequality results from Lemma~\ref{lem:matvec}. 

    \textbf{Step 2: Bound each term in the error decomposition}

    We will bound each term in the right-hand side of Eqn.~\eqref{ieq:recur_1} and \eqref{ieq:recur_2}. For the term related to the \ac{ff} module, we have that
    \begin{align}
        &\Big\| \ff^{(i)}\big(\lnor(\tilY^{(i)})\big)-\ff^{(i)}\big(\lnor(Y^{(i)})\big)\Big\|_{2,\infty}\nonumber\\
        &\quad\leq L_{\lip}\cdot\|W_{A,2}^{(i)}\|_{\rmF}\cdot \|W_{A,1}^{(i)}\|_{\rmF}\cdot\big\|\lnor(\tilY^{(i)})-\lnor(Y^{(i)})\big\|_{2,\infty}\nonumber\\
        &\quad\leq L_{\lip}\cdot\|W_{A,2}^{(i)}\|_{\rmF}\cdot \|W_{A,1}^{(i)}\|_{\rmF}\cdot \min\big\{2, \|\tilY^{(i)}-Y^{(i)}\|_{2,\infty}\big\}\nonumber\\
        &\quad \leq L_{\lip}\cdot B^{2}\cdot \min\big\{2, \|\tilY^{(i)}-Y^{(i)}\|_{2,\infty}\big\},\label{ieq:ff}
    \end{align}
    where the first inequality results from Lemma~\ref{lem:matvec}, the second inequality results from the definition of $\ln(\cdot)$, and the last inequality results from Assumption~\ref{assump:bd}. For the term related to \ac{mha} module in the right-hand side of Eqn.~\eqref{ieq:recur_2}, we have that
    \begin{align}
        &\Big\| \bbI\{i\notin\calI\}\cdot\mha^{(i)}\big(\lnor(\tilX^{(i-1)})\big)+\bbI\{i\in\calI\}\cdot\tilmha^{(i)}\big(\lnor(\tilX^{(i-1)})\big)-\mha^{(i)}\big(\lnor(X^{(i-1)})\big)\Big\|_{2,\infty}\nonumber\\
        &\quad = \bbI\{i\notin\calI\}\cdot\Big\| \mha^{(i)}\big(\lnor(\tilX^{(i-1)})\big)-\mha^{(i)}\big(\lnor(X^{(i-1)})\big)\Big\|_{2,\infty}\nonumber\\
        &\quad\qquad +\bbI\{i\in\calI\}\cdot\Big\| \tilmha^{(i)}\big(\lnor(\tilX^{(i-1)})\big)-\mha^{(i)}\big(\lnor(X^{(i-1)})\big)\Big\|_{2,\infty}\nonumber\\
        &\quad \leq  \bbI\{i\notin\calI\}\cdot\Big\| \mha^{(i)}\big(\lnor(\tilX^{(i-1)})\big)-\mha^{(i)}\big(\lnor(X^{(i-1)})\big)\Big\|_{2,\infty}\nonumber\\
        &\quad\qquad +\bbI\{i\in\calI\}\cdot\bigg(\Big\| \tilmha^{(i)}\big(\lnor(\tilX^{(i-1)})\big)-\mha^{(i)}\big(\lnor(\tilX^{(i-1)})\big)\Big\|_{2,\infty}\nonumber\\
        &\quad\qquad\qquad\qquad\qquad\qquad\qquad\qquad\qquad+\Big\| \mha^{(i)}\big(\lnor(\tilX^{(i-1)})\big)-\mha^{(i)}\big(\lnor(X^{(i-1)})\big)\Big\|_{2,\infty}\bigg)\nonumber\\
        &\quad \leq H\cdot B\big(1+4 B^{2}\big)\big\|\lnor(X^{(i-1)})-\lnor(\tilX^{(i-1)})\big\|_{2,\infty} + \bbI\{i\in\calI\}\cdot 2 BH\cdot s_{i}\nonumber\\
        &\quad\leq H\cdot B\big(1+4 B^{2}\big)\min\big\{2,\big\|X^{(i-1)}-\tilX^{(i-1)}\big\|_{2,\infty}\big\}+\bbI\{i\in\calI\}\cdot 2 BH\cdot s_{i},\label{ieq:mha}
    \end{align}
    where the first inequality results from the triangle inequality, the second inequality results from Lemma~\ref{lem:sink}. Define the error $e_{X}^{(i)}=\|X^{(i)}-\tilX^{(i)}\|_{2,\infty}$ with $e_{X}^{(0)}=0$. Combining Eqn.~\eqref{ieq:recur_1}, \eqref{ieq:recur_2}, \eqref{ieq:ff}, and \eqref{ieq:mha}, we have that
    \begin{align}
        e_{X}^{(i)}&\leq e_{X}^{(i-1)}+HB(1+4B^{2})\min\{2,e_{X}^{(i-1)}\}+\bbI\{i\in\calI\}2BHs_{i}\nonumber\\
        &\quad +L_{\lip}B^{2}\min\big\{2,e_{X}^{(i-1)}+HB(1+4B^{2})\min\{2,e_{X}^{(i-1)}\}+\bbI\{i\in\calI\}2BHs_{i}\big\}.\label{ieq:e_recur}
    \end{align}

    \textbf{Step 3: Conclude the proof.}

    We derive the recursive expression of the hidden state error by relaxing the right-hand side of Eqn.~\eqref{ieq:e_recur} as follows.
    \begin{align*}
        e_{X}^{(i)}&\leq e_{X}^{(i-1)}+HB(1+4B^{2})\min\{2,e_{X}^{(i-1)}\}+\bbI\{i\in\calI\}2BHs_{i}\nonumber\\
        &\quad +L_{\lip}B^{2}\min\big\{2,\big[1+HB(1+4B^{2})\big]e_{X}^{(i-1)}\big\}+\bbI\{i\in\calI\}2L_{\lip}B^{3}Hs_{i}\nonumber\\
        &\leq e_{X}^{(i-1)}+\big(HB+L_{\lip}B^{2}+4HB^{3}\big)\min\big\{2,\big[1+HB(1+4B^{2})\big]e_{X}^{(i-1)}\big\}+2H(B+L_{\lip}B^{3})\bbI\{i\in\calI\}s_{i}.
    \end{align*}
    This proves the recursive formula. By summing this inequality from $i=1$ to $i=L$, we have that
    \begin{align}
        e_{X}^{(L)}\leq 2L\big(HB+L_{\lip}B^{2}+4HB^{3}\big)+2(B+L_{\lip}B^{3})\sum_{i\in\calI}s_{i}\label{ieq:err_L}.
    \end{align}
    Combining Eqn.~\eqref{ieq:unemb} and \eqref{ieq:err_L}, we have that
    \begin{align*}
        \Big\|\widetilde{\llm}(X,\theta,\calI)-\llm(X,\theta)\Big\|_{2,\infty} \leq 2LB^{2}\big(H+L_{\lip}B+4HB^{2}\big)+2B^{2}H(1+L_{\lip}B^{2})\sum_{i\in\calI}s_{i}.
    \end{align*}
    Thus, we conclude the proof of Theorem~\ref{thm:err_analysis}.

\end{proof}

\section{Supporting Lemmas}
\begin{lemma}[Corollary A.7 in \cite{edelman2022inductive} ]\label{lem:smlip}
    For any $x,y\in\bbR^{d}$, we have
    \begin{align}
        \|\sm(x)-\sm(y)\|_{1}\leq 2\|x-y\|_{\infty}.\nonumber
    \end{align}
\end{lemma}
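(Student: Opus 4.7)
The plan is to use the mean value inequality along a straight line from $y$ to $x$, combined with an explicit bound on the softmax Jacobian as an operator from $(\bbR^d,\|\cdot\|_\infty)$ to $(\bbR^d,\|\cdot\|_1)$. Since $\sm$ is smooth, the fundamental theorem of calculus gives
\begin{equation*}
\sm(x) - \sm(y) \;=\; \int_0^1 J_{\sm}\big(y + t(x-y)\big)\,(x-y)\,dt,
\end{equation*}
and applying the triangle inequality for the $\ell_1$ norm reduces the claim to showing that $\|J_{\sm}(z) v\|_1 \leq 2\|v\|_\infty$ for every $z,v \in \bbR^d$.

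Next I would compute the Jacobian explicitly. A direct differentiation of $\sm_i(z) = e^{z_i}/\sum_k e^{z_k}$ gives $\partial \sm_i/\partial z_j = p_i(\delta_{ij} - p_j)$, where $p := \sm(z)$; equivalently $J_{\sm}(z) = \mathrm{diag}(p) - pp^\top$. Hence $(J_{\sm}(z)v)_i = p_i\big(v_i - \langle p,v\rangle\big)$.

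The key step is then the $\ell_\infty\!\to\!\ell_1$ bound. Because $p_i \geq 0$ and $\sum_i p_i = 1$, the scalar $\langle p,v\rangle$ is a convex combination of the entries of $v$, so it lies in $[\min_j v_j,\ \max_j v_j]$. Consequently $|v_i - \langle p,v\rangle| \leq \max_j v_j - \min_j v_j \leq 2\|v\|_\infty$ for every $i$, and summing against the probability vector $p$ yields
\begin{equation*}
\|J_{\sm}(z) v\|_1 \;=\; \sum_{i=1}^d p_i\,\big|v_i - \langle p,v\rangle\big| \;\leq\; 2\|v\|_\infty\sum_{i=1}^d p_i \;=\; 2\|v\|_\infty.
\end{equation*}
Plugging this pointwise bound back into the integral representation with $v = x-y$ finishes the proof.

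I do not expect a serious obstacle: the only thing requiring care is obtaining the sharp constant $2$ rather than a looser value like $d$ or $\sqrt{d}$. The trick that buys sharpness is to express $(J_{\sm}v)_i$ as $p_i$ times a deviation from the $p$-weighted mean of $v$, and then bound the deviation by the \emph{range} of $v$ (which is at most $2\|v\|_\infty$) before integrating against the probability weights $p_i$. Any approach that bounds $|v_i - \langle p,v\rangle|$ by $\|v\|_1$ or $\|v\|_2$ would lose this tightness, so this is the one place the argument must be executed precisely.
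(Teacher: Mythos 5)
Your proof is correct. Note that the paper does not actually prove this lemma --- it imports it verbatim as Corollary~A.7 of \cite{edelman2022inductive} --- so there is no in-paper argument to compare against; your derivation is essentially the standard proof underlying that cited result. The chain of steps is sound: the integral representation $\sm(x)-\sm(y)=\int_0^1 J_{\sm}(y+t(x-y))(x-y)\,dt$, the identity $J_{\sm}(z)=\mathrm{diag}(p)-pp^{\top}$ with $p=\sm(z)$, and the observation that $(J_{\sm}(z)v)_i=p_i\bigl(v_i-\langle p,v\rangle\bigr)$ with $\langle p,v\rangle$ a convex combination of the entries of $v$, which gives $\|J_{\sm}(z)v\|_1=\sum_i p_i\,|v_i-\langle p,v\rangle|\leq (\max_j v_j-\min_j v_j)\leq 2\|v\|_{\infty}$ and hence the sharp constant $2$.
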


\begin{lemma}[Lemma 17 in \cite{zhang2022relational} ]\label{lem:matvec}
    Given any two conjugate numbers $u,v\in [1,\infty]$, i.e., $\frac{1}{u}+\frac{1}{v}=1$, and $1\leq p\leq \infty$, for any $A\in\bbR^{r\times c}$ and $x\in \bbR^{c}$, we have
    \begin{align}
        \|Ax\|_{p}\leq \|A^{\top}\|_{p,u}\|x\|_{v}\quad\mbox{and}\quad  \|Ax\|_{p}\leq \|A\|_{u,p}\|x\|_{v}\nonumber.
    \end{align}
\end{lemma}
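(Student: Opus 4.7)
The plan is to establish both inequalities by decomposing $Ax$ in two complementary ways---row-wise for the second bound and column-wise for the first---and then invoking H\"older's inequality together with the mixed-norm definition $\|X\|_{p,q}=\|[\|[X]_{1,:}\|_{p},\dots,\|[X]_{d_{1},:}\|_{p}]\|_{q}$ stated at the top of the notation section. No auxiliary result from the paper is needed; only the defining formula for $\|\cdot\|_{p,q}$, the scalar H\"older inequality, and Minkowski's inequality for $\ell_{p}$.

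For the second inequality $\|Ax\|_{p}\leq \|A\|_{u,p}\|x\|_{v}$, I would first write the $i$-th entry of $Ax$ as the inner product $\langle [A]_{i,:},x\rangle$ between the $i$-th row of $A$ and $x$. Applying H\"older's inequality with the conjugate pair $(u,v)$ bounds $|(Ax)_{i}|$ by $\|[A]_{i,:}\|_{u}\|x\|_{v}$. Since the entries of $|Ax|$ are dominated pointwise by $\|[A]_{i,:}\|_{u}\|x\|_{v}$, monotonicity of $\|\cdot\|_{p}$ on nonnegative vectors lets me take $\ell_{p}$ over $i$, producing
\begin{align*}
\|Ax\|_{p}\leq \big\|[\|[A]_{1,:}\|_{u},\dots,\|[A]_{r,:}\|_{u}]\big\|_{p}\cdot \|x\|_{v},
\end{align*}
which is exactly $\|A\|_{u,p}\|x\|_{v}$ by the definition of the mixed norm.

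For the first inequality $\|Ax\|_{p}\leq \|A^{\top}\|_{p,u}\|x\|_{v}$, I would decompose along columns, writing $Ax=\sum_{j=1}^{c}x_{j}[A]_{:,j}$. Minkowski's inequality for $\|\cdot\|_{p}$ gives $\|Ax\|_{p}\leq \sum_{j}|x_{j}|\cdot\|[A]_{:,j}\|_{p}$. I would then view the right-hand side as a scalar inner product between the vectors $(|x_{j}|)_{j=1}^{c}$ and $(\|[A]_{:,j}\|_{p})_{j=1}^{c}$ and invoke scalar H\"older with exponents $(v,u)$ to obtain
\begin{align*}
\sum_{j=1}^{c}|x_{j}|\cdot\|[A]_{:,j}\|_{p}\leq \|x\|_{v}\cdot \big\|[\|[A]_{:,1}\|_{p},\dots,\|[A]_{:,c}\|_{p}]\big\|_{u}.
\end{align*}
Since the rows of $A^{\top}$ coincide with the columns of $A$, the second factor is precisely $\|A^{\top}\|_{p,u}$, closing the argument.

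The only mild subtlety---rather than a real obstacle---is the boundary exponents $p,u,v\in\{1,\infty\}$, which require the convention $1/\infty=0$ and the standard limiting forms of H\"older and Minkowski; both cases go through without modification. A second small bookkeeping point is verifying that ``row-wise $\ell_{p}$ then outer $\ell_{q}$'' in the definition is applied in the correct order when one passes between $A$ and $A^{\top}$, but this is a direct substitution once the column decomposition is in hand.
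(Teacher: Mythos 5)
Your proof is correct. The paper does not supply its own argument for this lemma---it is imported verbatim as Lemma~17 of the cited reference \cite{zhang2022relational}---so there is nothing internal to compare against; your two-part argument (row-wise H\"older for $\|A\|_{u,p}$, column decomposition plus Minkowski and then H\"older on the coefficient sum for $\|A^{\top}\|_{p,u}$) is the standard derivation and matches the paper's convention that $\|\cdot\|_{p,q}$ applies the $\ell_{p}$ norm row-wise first.
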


\begin{lemma}[Lemma I.8 in \cite{zhang2023and}]\label{lem:mhalip}
    For any $X,\tilde{X}\in\bbR^{N\times d}$, and any $W_{Q,h},W_{K,h}\in\bbR^{d\times d_{h}},W_{V,h}\in\bbR^{d\times d}$ for $h\in [H]$ , if $\|X\|_{2,\infty},\|\tilde{X}\|_{2,\infty}\leq B_{X}$, $\|W_{Q,h}\|_{\rmF}\leq B_{Q}$, $\|W_{K,h}\|_{\rmF},\leq B_{K}$, $\|W_{V,h}\|_{\rmF}\leq B_{V}$ for $h\in[H]$, then we have 
    \begin{align*}
        &\Big\|\mha\big(X,\{W_{Q,h},W_{K,h},W_{V,h}\}_{h=1}^{H})-\mha(\tilde{X},\{W_{Q,h},W_{K,h},W_{V,h}\}_{h=1}^{H}\big)\Big\|_{2,\infty}\nonumber\\
        &\quad\leq H\cdot B_{V}\big(1+4B_{X}^{2}\cdot B_{Q}B_{K}\big)\|X-\tilde{X}\|_{2,\infty}.
    \end{align*}
\end{lemma}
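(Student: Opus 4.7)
The plan is to prove the Lipschitz bound head-by-head and then combine. By the triangle inequality applied to the sum over $h\in[H]$ in the definition of $\mha$, it suffices to show that for a single head with parameters $(W_Q,W_K,W_V)$ obeying the stated Frobenius bounds,
$$
\bigl\|\att(X,W_Q,W_K,W_V)-\att(\tilde X,W_Q,W_K,W_V)\bigr\|_{2,\infty}\leq B_V\bigl(1+4B_X^2B_QB_K\bigr)\|X-\tilde X\|_{2,\infty},
$$
so that summing over the $H$ heads supplies the outer factor $H$. Writing $A(X)=\sm(XW_QW_K^\top X^\top+M)$ for the row-stochastic attention matrix produced by the softmax (nonnegative rows summing to one) and $\tilde A=A(\tilde X)$, I use the standard cross-term decomposition
$$
A(X)XW_V-\tilde A\tilde XW_V=A(X)(X-\tilde X)W_V+\bigl(A(X)-\tilde A\bigr)\tilde XW_V
$$
and handle the two summands separately.

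For the first (value-perturbation) summand, row-stochasticity of $A(X)$ implies that every row of $A(X)(X-\tilde X)$ is a convex combination of rows of $X-\tilde X$, so $\|A(X)(X-\tilde X)\|_{2,\infty}\leq\|X-\tilde X\|_{2,\infty}$. Right-multiplying by $W_V$ costs a factor $\|W_V\|_{\rmF}\leq B_V$ via Lemma~\ref{lem:matvec}, yielding the contribution $B_V\|X-\tilde X\|_{2,\infty}$. For the second (attention-perturbation) summand, set $S=XW_QW_K^\top X^\top$ and $\tilde S=\tilde XW_QW_K^\top\tilde X^\top$; since the causal mask $M$ is independent of the input and cancels inside each row's softmax, Lemma~\ref{lem:smlip} gives $\|[A(X)-\tilde A]_{i,:}\|_1\leq 2\|[S-\tilde S]_{i,:}\|_\infty$ for every row $i$.

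To control the entries of $S-\tilde S$ I apply a bilinear add-and-subtract,
$$
[S-\tilde S]_{i,j}=\bigl([X]_{i,:}-[\tilde X]_{i,:}\bigr)W_QW_K^\top[X]_{j,:}^\top+[\tilde X]_{i,:}W_QW_K^\top\bigl([X]_{j,:}-[\tilde X]_{j,:}\bigr)^\top,
$$
each term of which is at most $B_QB_KB_X\|X-\tilde X\|_{2,\infty}$ by Cauchy--Schwarz together with the submultiplicative estimate $\|W_QW_K^\top\|_{\rmF}\leq\|W_Q\|_{\rmF}\|W_K\|_{\rmF}$; thus $\|[A(X)-\tilde A]_{i,:}\|_1\leq 4B_QB_KB_X\|X-\tilde X\|_{2,\infty}$. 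Viewing the $i$-th row of $(A(X)-\tilde A)\tilde X$ as a signed combination of rows of $\tilde X$ then gives $\bigl\|[(A(X)-\tilde A)\tilde X]_{i,:}\bigr\|_2\leq\|[A(X)-\tilde A]_{i,:}\|_1\cdot\|\tilde X\|_{2,\infty}\leq 4B_QB_KB_X^2\|X-\tilde X\|_{2,\infty}$, and one more application of Lemma~\ref{lem:matvec} with $W_V$ introduces another $B_V$. Adding the two contributions produces the single-head constant $B_V(1+4B_X^2B_QB_K)$, and summing over heads concludes the bound.

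The main obstacle is the careful accounting of mixed row-wise norms across the softmax: the $\ell_\infty\!\to\!\ell_1$ Lipschitz inequality must be invoked \emph{row-by-row}, so that the resulting row-wise $\ell_1$ bound on $A(X)-\tilde A$ pairs cleanly (via an $\ell_1/\ell_\infty$ duality applied in the token index and an $\ell_2$ estimate applied in the feature index) with the $\ell_{2,\infty}$ norm of $\tilde X$. Aside from this bookkeeping, the remaining ingredients -- cancellation of the causal mask, Cauchy--Schwarz and Frobenius submultiplicativity for the score perturbation, and the triangle inequality over the $H$ heads -- are routine.
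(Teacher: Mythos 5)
The paper does not prove this lemma at all---it is imported verbatim as Lemma~I.8 of the cited reference---so there is no in-paper argument to compare against. Your proof is correct and self-contained: the cross-term decomposition $A(X)XW_V-\tilde A\tilde XW_V=A(X)(X-\tilde X)W_V+(A(X)-\tilde A)\tilde XW_V$, the row-stochasticity bound for the first summand, and the row-wise application of Lemma~\ref{lem:smlip} combined with the bilinear add-and-subtract on the score matrix for the second, reproduce exactly the stated constant $H\cdot B_V(1+4B_X^2B_QB_K)$, and this is precisely the standard route by which such attention Lipschitz bounds are derived in the source literature.
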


\begin{lemma}\label{lem:sink}
    For a query vector $q\in\bbR^{d}$, and two sets of key-value pairs $K_{1}\in\bbR^{N_{1}\times d}$, $K_{2}\in\bbR^{N_{2}\times d}$, $V_{1}\in\bbR^{N_{1}\times d}$, and $V_{2}\in\bbR^{N_{2}\times d}$, We define attention scores $\sm(q^{\top}[K_{1},K_{2}]^{\top})$ and $\sm(q^{\top}K_{1}^{\top})$ as 
    \begin{align*}
        \sm(q^{\top}[K_{1},K_{2}]^{\top}) = [s_{1}^{\top},s_{2}^{\top}],\text{ and } \sm(q^{\top}K_{1}^{\top})=\tils_{1}^{\top}.
    \end{align*}
    Then we have that
    \begin{align*}
        \big\|\sm(q^{\top}K_{1}^{\top})V_{1}-\sm(q^{\top}[K_{1},K_{2}]^{\top})[V_{1}^{\top},V_{2}^{\top}]^{\top}\big\|_{2}\leq 2\|s_{2}\|_{1}\cdot \max\{\|V_{1}\|_{2,\infty},\|V_{2}\|_{2,\infty}\}.
    \end{align*}
\end{lemma}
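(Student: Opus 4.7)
The plan is to reduce the stated bound to two clean estimates via a single algebraic decomposition of the difference. Let $a_i = \exp(q^{\top}[K_1]_{i,:}^{\top})$, $b_j = \exp(q^{\top}[K_2]_{j,:}^{\top})$, $Z_1 = \sum_i a_i$, and $Z = Z_1 + \sum_j b_j$. By definition $[s_1]_i = a_i/Z$, $[s_2]_j = b_j/Z$, and $[\tils_1]_i = a_i/Z_1$. The first step is to observe the scalar identity $\tils_1 = \big(1/\|s_1\|_1\big)\, s_1 = \big(1/(1-\|s_2\|_1)\big)\, s_1$, since $\|s_1\|_1 + \|s_2\|_1 = 1$. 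This is the key structural fact: restricting softmax to $K_1$ is equivalent to rescaling the full $s_1$ by $1/(1-\|s_2\|_1)$.

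Second, I would write the difference as
\begin{align*}
\sm(q^{\top}K_{1}^{\top})V_{1}-\sm(q^{\top}[K_{1},K_{2}]^{\top})[V_{1}^{\top},V_{2}^{\top}]^{\top}
= (\tils_{1}-s_{1})^{\top}V_{1} \;-\; s_{2}^{\top}V_{2},
\end{align*}
apply the triangle inequality, and bound each term using Lemma~\ref{lem:matvec} with $(u,v)=(1,\infty)$: namely, $\|v^{\top}M\|_{2} \le \|v\|_{1}\,\|M\|_{2,\infty}$. For the second term this gives $\|s_{2}^{\top}V_{2}\|_{2}\le \|s_{2}\|_{1}\|V_{2}\|_{2,\infty}$. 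For the first term, the identity above yields
\begin{align*}
\tils_{1}-s_{1} \;=\; \frac{\|s_{2}\|_{1}}{1-\|s_{2}\|_{1}}\,s_{1},
\qquad
\|\tils_{1}-s_{1}\|_{1} \;=\; \frac{\|s_{2}\|_{1}}{1-\|s_{2}\|_{1}}\,\|s_{1}\|_{1} \;=\; \|s_{2}\|_{1},
\end{align*}
so that $\|(\tils_{1}-s_{1})^{\top}V_{1}\|_{2}\le \|s_{2}\|_{1}\|V_{1}\|_{2,\infty}$. Summing the two bounds and replacing each $\|V_{k}\|_{2,\infty}$ with the maximum yields the claimed factor of $2\|s_{2}\|_{1}\cdot \max\{\|V_{1}\|_{2,\infty},\|V_{2}\|_{2,\infty}\}$.

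The only subtlety I expect, and the step I would double-check carefully, is the collapse $\|\tils_{1}-s_{1}\|_{1}=\|s_{2}\|_{1}$: a naive application of Lemma~\ref{lem:smlip} (the general softmax-Lipschitz estimate) would give a weaker logit-based bound, so I want to use the explicit renormalization identity rather than a generic Lipschitz argument. A minor edge case is $\|s_2\|_1 = 1$, where $Z_1 = 0$ and $\tils_1$ is undefined; but in this regime $N_1=0$ or all $a_i$ vanish, and the bound holds trivially with the convention that the restricted attention is the zero vector. Once these two points are handled, the proof is essentially one decomposition plus two applications of Lemma~\ref{lem:matvec}.
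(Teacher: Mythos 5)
Your proof is correct and follows essentially the same route as the paper's: the same decomposition into $(\tils_{1}-s_{1})^{\top}V_{1}-s_{2}^{\top}V_{2}$, the same key equality $\|\tils_{1}-s_{1}\|_{1}=\|s_{2}\|_{1}$ (which the paper obtains by direct term-by-term summation rather than via your renormalization identity $\tils_{1}=s_{1}/\|s_{1}\|_{1}$, but the two computations are interchangeable), and the same two applications of the $\ell_{1}$--$\|\cdot\|_{2,\infty}$ bound from Lemma~\ref{lem:matvec}. Your observation that one should avoid the generic softmax-Lipschitz estimate of Lemma~\ref{lem:smlip} is exactly right, and the $\|s_{2}\|_{1}=1$ edge case you flag cannot actually occur for $N_{1}\geq 1$ since every $a_{i}=\exp(\cdot)>0$.
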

\begin{proof}[Proof of Lemma~\ref{lem:sink}]
    
    In fact, we have that
    \begin{align*}
        \sm(q^{\top}[K_{1},K_{2}]^{\top})[V_{1}^{\top},V_{2}^{\top}]^{\top}= s_{1}^{\top}V_{1}+s_{2}^{\top}V_{2},\text{ and }\sm(q^{\top}K_{1}^{\top})V_{1}=\tils_{1}^{\top}V_{1}.
    \end{align*}
    Further, the difference between $s_{1}$ and $\tils_{1}$ can be upper bounded as 
    \begin{align*}
        &\|s_{1}-\tils_{1}\|_{1}\nonumber\\
        &\quad = \sum_{i=1}^{N_{1}}\bigg|\frac{\exp(q^{\top}[K_{1}]_{i,:})}{\sum_{j=1}^{N_{1}}\exp(q^{\top}[K_{1}]_{j,:})+\sum_{l=1}^{N_{2}}\exp(q^{\top}[K_{2}]_{l,:})}-\frac{\exp(q^{\top}[K_{1}]_{i,:})}{\sum_{j=1}^{N_{1}}\exp(q^{\top}[K_{1}]_{j,:})}\bigg|\nonumber\\
        &\quad = \sum_{i=1}^{N_{1}}\frac{\exp(q^{\top}[K_{1}]_{i,:})\sum_{l=1}^{N_{2}}\exp(q^{\top}[K_{2}]_{l,:})}{\big(\sum_{j=1}^{N_{1}}\exp(q^{\top}[K_{1}]_{j,:})+\sum_{l=1}^{N_{2}}\exp(q^{\top}[K_{2}]_{l,:})\big)\sum_{j=1}^{N_{1}}\exp(q^{\top}[K_{1}]_{j,:})}\nonumber\\
        &\quad = \|s_{2}\|_{1},
    \end{align*}
    where the first equality results from the definition of $\sm(\cdot)$, and the last equality results from the definition of $s_{2}$. Then we have that
    \begin{align*}
        &\big\|\sm(q^{\top}K_{1}^{\top})V_{1}-\sm(q^{\top}[K_{1},K_{2}]^{\top})[V_{1}^{\top},V_{2}^{\top}]^{\top}\big\|_{2}\nonumber\\
        &\quad = \big\|s_{1}^{\top}V_{1}+s_{2}^{\top}V_{2}- \tils_{1}^{\top}V_{1}\big\|_{2}\nonumber\\
        &\quad \leq \|s_{1}-\tils_{1}\|_{1}\cdot \|V_{1}\|_{2,\infty}+\|s_{2}\|_{1}\cdot \|V_{2}\|_{2,\infty}\nonumber\\
        &\quad \leq 2\|s_{2}\|_{1}\cdot \max\{\|V_{1}\|_{2,\infty},\|V_{2}\|_{2,\infty}\}.
    \end{align*}
    Thus, we conclude the proof of Lemma~\ref{lem:sink}.
    
\end{proof}
\end{document}